\documentclass[10pt,letterpaper]{article}
\usepackage[margin=1in]{geometry}
\usepackage{times}
\usepackage[authoryear,round]{natbib}
\usepackage[T1]{fontenc}
\usepackage{amsmath,amssymb,amsthm,bm,booktabs,graphicx,xcolor}
\usepackage{url,dsfont,multirow,enumitem}
\usepackage{comment}
\usepackage{placeins,float,longtable}
\usepackage{tikz}
\usetikzlibrary{matrix,arrows.meta,positioning,calc}
\usepackage{hyperref}
\DeclareMathOperator*{\argmax}{arg\,max}
\DeclareMathOperator{\TopK}{TopK}
\DeclareMathOperator{\TV}{TV}
\newcommand{\R}{\mathbb{R}}
\newcommand{\E}{\mathbb{E}}
\newcommand{\ind}{\mathds{1}}
\newcommand{\simplex}{\triangle^n}

\newcommand{\softmax}{\operatorname{softmax}}

\newtheorem{theorem}{Theorem}
\newtheorem{proposition}[theorem]{Proposition}
\graphicspath{{./}}
\hypersetup{hidelinks,pdfauthor={Tom\'as Brogueira, Marcos Treviso, Miguel Couceiro},pdftitle={How Sparse Probability Maps Shape Mixture-of-Experts Routing},pdfsubject={}}

\usepackage{arydshln}
\makeatletter
\def\adl@drawiv#1#2#3{%
        \hskip.5\tabcolsep
        \xleaders#3{#2.5\@tempdimb #1{1}#2.5\@tempdimb}%
                #2\z@ plus1fil minus1fil\relax
        \hskip.5\tabcolsep}
\newcommand{\cdashlinelr}[1]{%
  \noalign{\vskip 2pt
           \global\let\@dashdrawstore\adl@draw
           \global\let\adl@draw\adl@drawiv}
  \cdashline{#1}[.4pt/2pt]
  \noalign{\global\let\adl@draw\@dashdrawstore
           \vskip 2pt}}
\makeatother
\setcitestyle{citesep={;},aysep={,},yysep={;}}
\usepackage{authblk}
\title{\bfseries How Sparse Probability Maps \\ Shape Mixture-of-Experts Routing}
\author[1,2,3]{Tom\'as Brogueira}
\author[1,3,4,5]{Marcos Treviso}
\author[1,2,4]{Miguel Couceiro}
\affil[1]{T\'ecnico, Universidade de Lisboa}
\affil[2]{INESC-ID}
\affil[3]{Instituto de Telecomunica\c{c}\~oes}
\affil[4]{ELLIS Unit Lisbon}
\affil[5]{Gandara AI}
\makeatletter
\ifdefined\AB@maketitle
\let\maketitle\AB@maketitle
\renewcommand{\@author}{
\normalsize
Tom\'as Brogueira$^{1,2,3}$, \quad
Marcos Treviso$^{1,3,4,5}$, \quad
Miguel Couceiro$^{1,2,4}$\\
\textnormal{\small $^1$T\'ecnico, Universidade de Lisboa}\\
\textnormal{\small $^2$INESC-ID \quad $^3$Instituto de Telecomunica\c{c}\~oes}\\
\textnormal{\small $^4$ELLIS Unit Lisbon \quad $^5$Gandara AI}\\
\textnormal{\small Lisbon, Portugal}\\
\textnormal{\small\texttt{tomas.brogueira@tecnico.ulisboa.pt}}
}
\fi
\makeatother
\date{}
\begin{document}
\maketitle
\makeatletter
\ifdefined\AB@maketitle\else
\begin{center}
\small Lisbon, Portugal\\
\texttt{tomas.brogueira@tecnico.ulisboa.pt}
\end{center}
\fi
\makeatother
\begin{abstract}
Mixture-of-experts (MoE) routers typically apply softmax to the router scores and keep the top-$K$ experts, making every token use exactly $K$ experts.
Sparsity-inducing probability maps such as sparsemax, $\alpha$-entmax and normmax can adaptively assign \emph{exact zeros} to selected experts, and therefore appear to offer token-dependent expert participation, even when using the same top-$K$ machinery.
In this work, we study whether and how this sparsity survives training.
We train matched 300M and 1B top-2 MoE language models with softmax, $1.5$-entmax, sparsemax and $2$-normmax, and find that the maps behave very differently once trained: at 1B, entmax discards 30\% less probability mass than softmax while almost never dropping a selected expert, sparsemax retains the most mass, and normmax routes 21\% of tokens to a single expert.
These outcomes are not properties of the maps alone.
Each map drops a selected expert only when the gap between the two largest scores reaches a fixed threshold, and the trained routers differ in the score distribution they learn: the entmax router learns scores with roughly half the spread of softmax's, which keeps its top-2 gaps below its threshold, while sparsemax and normmax, which share the same threshold, learn different gap distributions and hence different participation.
Routers thus co-adapt their scores to the map, and a map's capacity to produce zeros does not by itself determine expert participation.
While none of the sparse maps improves validation loss over softmax, they make the trained models far less sensitive to selecting more experts at inference: sparsemax trained with $K{=}2$ loses $0.02$ nats when run with $K{=}8$, where softmax loses $0.58$.
Our results indicate that adaptive MoE routing has to be designed around the joint behavior of the probability map and the learned scores, rather than around the map alone.\footnote{Our code will be released upon acceptance.}
\end{abstract}

\section{Introduction}
\label{sec:intro}

Mixture-of-experts (MoE) layers scale model capacity by activating only a few expert networks per token~\citep{shazeer2017outrageously,fedus2022switch,jiang2024mixtral}.
A router scores the experts, a probability map turns the scores into weights, and a top-$K$ operator keeps the $K$ highest-weighted experts, whose outputs are combined with the renormalized weights.
With softmax, every token uses exactly $K$ experts, regardless of how many it needs, and the selected experts often account for only a fraction of the router's probability mass---the rest is discarded by truncation.

Sparsity-inducing probability maps promise a way out without changing the top-$K$ machinery.
Sparsemax~\citep{martins2016softmax}, $\alpha$-entmax~\citep{peters2019sparse} and normmax~\citep{santos2024sparse,santos2025hopfield} can assign \emph{exact zeros}, so a selected expert may receive zero weight and drop out of the mixture. 
The number of experts that actually contribute to a token is then \emph{data-dependent}, even when the expert budget $K$ stays fixed.
Several recent routers also build on this idea~\citep{oldfield2024multilinear,thaman2025dual,wang2025remoe}.
However, how sparsity behaves in these models is not obvious.
Since the router scores are learned, changing the map also changes the scores that the map receives: a trained sparsemax router and a trained softmax router differ both in how they transform scores and in which scores they learned to produce.
As a result, a map that can produce zeros may rarely do so once its router has been trained

In this paper, we study whether the sparsity of the probability map survives training, and what determines expert participation when it does not.
We train matched 300M and 1B top-2 MoE language models with softmax, $1.5$-entmax, sparsemax and $2$-normmax under an identical architecture, data and balancing objective, and we measure the probability mass retained by the selected experts (\emph{retention}), the number of selected experts with non-zero weight (\emph{participation}), and predictive quality.
To relate the map to the scores it receives, we record the router scores of every trained model and measure how their spread and top-2 gaps sit relative to each map's sparsification threshold.
Figure~\ref{fig:overview} summarizes the setting and the outcomes.
We find that a map's ability to produce zeros does not, by itself, determine how many experts a token uses.
Our main contributions are:
\begin{itemize}[leftmargin=*]
    \item \textbf{A controlled study of sparse maps as MoE routers.} We train top-2 MoE language models at two scales that differ only in the probability map, and measure retention, participation, load balance and predictive quality, together with the router scores of every trained model (\S\ref{sec:setup}).
    \item \textbf{Retention and participation are distinct outcomes.} Under top-2 selection, mass is discarded only when more than two experts have positive probability, while participation drops only when a single expert does (Equation~\ref{eq:observed-support-decomposition}). The trained routers land in different regimes: at 1B, $1.5$-entmax discards $30\%$ less mass than softmax yet keeps both experts for $99.95\%$ of tokens, whereas normmax-2 routes $21\%$ of tokens to a single expert (\S\ref{sec:survive}).
    \item \textbf{The map's threshold and the learned scores jointly set participation.} Each map drops a selected expert exactly when the top-2 gap reaches a fixed threshold. The sparse routers learn smaller score spreads than softmax, keeping $1.5$-entmax's gaps below its threshold of $2$; sparsemax and normmax-2 share the threshold $1$, yet route $6.9\%$ and $21.3\%$ of tokens to one expert, as their routers learn different gap distributions (\S\ref{sec:map-or-scores}).
    \item \textbf{Sparse maps reduce sensitivity to the inference-time budget.} None improves validation loss at the trained budget, but running the 1B models trained with $K{=}2$ at $K{=}8$ costs sparsemax $0.02$ nats against $0.58$ for softmax, as sparsemax gives zero weight to most of the extra slots (\S\ref{sec:budget}).
\end{itemize}

\begin{figure}[t]
\centering
\definecolor{csoft}{HTML}{4C72B0}
\definecolor{cent}{HTML}{DD8452}
\definecolor{cspa}{HTML}{55A868}
\definecolor{cnor}{HTML}{8172B3}
\providecommand{\barchart}[4]{%
  \draw[black!45, line width=0.4pt] (#1,#2) -- (#1+1.25,#2);
  \foreach \j/\h in {#4}{%
    \pgfmathsetmacro{\xx}{#1+0.05+\j*0.15}%
    \ifnum\j<2 \fill[#3!90!black] (\xx,#2) rectangle (\xx+0.11,#2+\h*1.1);
    \else \fill[black!25] (\xx,#2) rectangle (\xx+0.11,#2+\h*1.1); \fi
  }}
\begin{tikzpicture}[scale=0.9, transform shape,
  font=\scriptsize,
  box/.style={draw=black!55, fill=black!4, rounded corners=2.5pt, minimum width=2.9cm, minimum height=0.5cm,
              text=black!80, line width=0.6pt, inner sep=3pt, align=center},
  mapbox/.style={box, draw=black!75, fill=white, line width=0.9pt},
  arr/.style={-{Stealth[length=4pt,width=3.5pt]}, black!60, line width=0.6pt},
  lab/.style={text=black!70, align=center},
  dk/.style={lab, font=\fontsize{6}{7}\selectfont},
  panel/.style={font=\scriptsize\bfseries, text=black!80},
  every node/.style={align=center}
]
\node[box] (rt) at (1.55,3.4) {router: scores $\bm z_i$};
\node[box] (mp) at (1.55,2.35) {probability map $\bm p_i=\pi(\bm z_i)$\\[-1pt]
  {softmax}\,$\cdot$\,{$\alpha$-entmax}\,$\cdot$\,{normmax-$\gamma$}};
\node[box] (tk) at (1.55,1.3) {top-$2$ selection, renormalize};
\node[box] (ex) at (1.55,0.25) {mixture of the selected experts};
\draw[arr] (rt) -- (mp); \draw[arr] (mp) -- (tk); \draw[arr] (tk) -- (ex);

\node[lab, text=csoft!70!black] at (4.92,3.62) {softmax};
\node[lab, text=cent!70!black]  at (6.77,3.62) {$1.5$-entmax};
\node[lab, text=cspa!70!black]  at (8.62,3.62) {sparsemax};
\node[lab, text=cnor!70!black]  at (10.47,3.62) {normmax-$2$};
\node[lab, anchor=east] at (5.2,3.25) {gap $0.5$};
\barchart{4.3}{2.5}{csoft}{0/0.328,1/0.199,2/0.163,3/0.109,4/0.081,5/0.060,6/0.036,7/0.024}
\barchart{6.15}{2.5}{cent}{0/0.557,1/0.246,2/0.157,3/0.038,4/0.002,5/0,6/0,7/0}
\barchart{8.0}{2.5}{cspa}{0/0.733,1/0.233,2/0.033,3/0,4/0,5/0,6/0,7/0}
\barchart{9.85}{2.5}{cnor}{0/0.602,1/0.266,2/0.132,3/0,4/0,5/0,6/0,7/0}
\node[dk] at (4.92,2.25) {$\delta{=}0.47$, $K_+{=}2$};
\node[dk] at (6.77,2.25) {$\delta{=}0.20$, $K_+{=}2$};
\node[dk] at (8.62,2.25) {$\delta{=}0.03$, $K_+{=}2$};
\node[dk] at (10.47,2.25) {$\delta{=}0.13$, $K_+{=}2$};
\node[lab, anchor=east] at (5.2,1.5) {gap $1.1$};
\barchart{4.3}{0.65}{csoft}{0/0.505,1/0.168,2/0.113,3/0.075,4/0.056,5/0.041,6/0.025,7/0.017}
\barchart{6.15}{0.65}{cent}{0/0.839,1/0.134,2/0.027,3/0,4/0,5/0,6/0,7/0}
\barchart{8.0}{0.65}{cspa}{0/1.0,1/0,2/0,3/0,4/0,5/0,6/0,7/0}
\barchart{9.85}{0.65}{cnor}{0/1.0,1/0,2/0,3/0,4/0,5/0,6/0,7/0}
\draw[cspa!80!black, dashed, line width=0.5pt] (8.2,0.65) rectangle (8.31,0.81);
\draw[cnor!80!black, dashed, line width=0.5pt] (10.05,0.65) rectangle (10.16,0.81);
\node[dk] at (4.92,0.4) {$\delta{=}0.33$, $K_+{=}2$};
\node[dk] at (6.77,0.4) {$\delta{=}0.03$, $K_+{=}2$};
\node[dk] at (8.62,0.4) {$\delta{=}0$, $K_+{=}1$};
\node[dk] at (10.47,0.4) {$\delta{=}0$, $K_+{=}1$};
\fill[black!25] (4.3,-0.2) rectangle (4.45,-0.07);
\node[lab, anchor=west] at (4.47,-0.135) {discarded mass $\delta$};
\draw[black!60, dashed, line width=0.5pt] (6.6,-0.2) rectangle (6.75,-0.07);
\node[lab, anchor=west] at (6.77,-0.135) {filler (zero weight)};
\fill[black!70] (9.0,-0.2) rectangle (9.15,-0.07);
\node[lab, anchor=west] at (9.17,-0.135) {selected pair};

\begin{scope}[yshift=3mm]
\node[lab, anchor=west] at (12.35,3.1) {discarded mass $\bar\delta$};
\foreach \y/\c/\v/\t in {2.78/csoft/0.661/0.66, 2.52/cent/0.462/0.46, 2.26/cspa/0.153/0.15, 2.00/cnor/0.250/0.25}{%
  \fill[\c!90!black] (12.4,\y-0.085) rectangle (12.4+\v*2.2,\y+0.085);
  \node[lab, anchor=west] at (12.4+\v*2.2+0.05,\y) {\t};}
\draw[black!45, line width=0.4pt] (12.4,1.87) -- (12.4,2.91);

\node[lab, anchor=west] at (12.35,1.4) {single-expert tokens};
\foreach \y/\c/\v/\t in {1.08/csoft/0.0/0\%, 0.82/cent/0.002/0.05\%, 0.56/cspa/0.323/6.9\%, 0.30/cnor/1.0/21.3\%}{%
  \fill[\c!90!black] (12.4,\y-0.085) rectangle (12.4+\v*1.8,\y+0.085);
  \node[lab, anchor=west] at (12.4+\v*1.8+0.05,\y) {\t};}
\draw[black!45, line width=0.4pt] (12.4,0.17) -- (12.4,1.21);
\end{scope}
\end{tikzpicture}
\caption{Left: in a top-$2$ MoE router, the probability map $\pi$ is the only component we change. Middle: on the same router scores, the maps differ in how much probability mass falls outside the selected pair (discarded mass $\delta$, gray) and in whether a selected expert receives zero weight (a \emph{filler}, dashed), which lowers the number of contributing experts $K_+$; shown for two illustrative $8$-expert score vectors whose top-2 gaps are $0.5$ and $1.1$. Right: statistics of our trained 1B MoE models.}
\label{fig:overview}
\end{figure}

\section{Background: probability maps and expert selection}
\label{sec:background}

A bias-free router maps a token representation $\bm x_i\in\R^d$ to scores $\bm z_i=\bm W_r\bm x_i\in\R^n$ and to probabilities $\bm p_i=\pi(\bm z_i)\in\simplex$, where $\simplex=\{\bm p\in\R^n:\bm p\geq\bm0,\ \bm1^\top\bm p=1\}$ is the $(n-1)$-dimensional probability simplex.
We write $\ind[\cdot]$ for the indicator function, and $[a]_+=\max(a,0)$.

\paragraph{Selection and combination.}
For $1\leq K\leq n$, let $\mathcal S_i=\TopK(\bm p_i,K)$ be the $K$ selected experts (ties are broken by score, then by index).
The \emph{contributing} experts are those selected with positive weight, $\mathcal T_i=\{j\in\mathcal S_i:p_{ij}>0\}$.
The retained mass, the normalized mixture weights and the layer output are
\begin{equation}
m_i=\sum_{j\in\mathcal S_i}p_{ij},\qquad
q_{ij}=\frac{p_{ij}\,\ind[j\in\mathcal S_i]}{m_i},\qquad
\operatorname{MoE}(\bm x_i)=\sum_{j\in\mathcal S_i}q_{ij}\,E_j(\bm x_i),
\label{eq:combine}
\end{equation}
where $E_j$ is the $j$th expert network.
Under softmax, $\mathcal T_i=\mathcal S_i$ always.
Under a sparse map, a selected expert can have $p_{ij}=0$; we call such a selection a \emph{filler}.
The task loss depends on the router only through $\bm q_i$, while the full distribution $\bm p_i$ also enters the balancing objective below.

\subsection{Sparsity-inducing probability maps}
\label{sec:maps}
We suppress the token index and write $\bm p=\pi(\bm z)$.
All maps we consider are regularized argmax transformations~\citep{blondel2020learning},
\begin{equation}
\pi_H(\bm z)=\argmax_{\bm p\in\simplex}\ \bm p^\top\bm z+H(\bm p),
\label{eq:regularized}
\end{equation}
and differ only in the entropy $H$.
Shannon entropy, $H^{\mathrm S}(\bm p)=-\sum_jp_j\log p_j$, gives softmax, $p_j=e^{z_j}/\sum_ke^{z_k}$, which is dense: every expert receives positive probability at finite scores.
The Tsallis entropy $H^{\mathrm{Ts}}_\alpha(\bm p)=\big(1-\sum_jp_j^\alpha\big)/(\alpha(\alpha-1))$ with $\alpha>1$ gives $\alpha$-entmax~\citep{peters2019sparse},
\begin{equation}
p_j=\big[(\alpha-1)z_j-\tau\big]_+^{1/(\alpha-1)},
\label{eq:entmax}
\end{equation}
where the threshold $\tau$ is the unique value for which the probabilities sum to one.
Experts whose score falls below $\tau/(\alpha-1)$ receive exactly zero probability; $\alpha\to1$ recovers softmax and $\alpha=2$ gives sparsemax, the Euclidean projection onto the simplex~\citep{martins2016softmax}.
The norm entropy $H^{\mathrm N}_\gamma(\bm p)=1-\|\bm p\|_\gamma$ with $\gamma>1$ gives $\gamma$-normmax~\citep{santos2024sparse,santos2025hopfield},
\begin{equation}
p_j=\frac{[z_j-\tau]_+^{1/(\gamma-1)}}{\sum_k[z_k-\tau]_+^{1/(\gamma-1)}},
\qquad \sum_j[z_j-\tau]_+^{\gamma/(\gamma-1)}=1,
\label{eq:normmax}
\end{equation}
which is also sparse but, unlike entmax, favors distributions that are closer to uniform on their support.
We use $\alpha=1.5$, for which the threshold has an exact sorting-based solution and which has been the default in sparse attention work~\citep{peters2019sparse,correia-etal-2019-adaptively}, and $\gamma=2$, the Euclidean case studied by \citet{santos2025hopfield}.
All four maps are differentiable almost everywhere and are trained by ordinary backpropagation.

\paragraph{Singleton thresholds.}
With the normalizations above, a map assigns the whole mass to a single expert exactly when the gap $g=z_{(1)}-z_{(2)}$ between the two largest scores reaches a fixed threshold, namely $g\geq 2$ for $1.5$-entmax and $g\geq 1$ for both sparsemax and normmax-2 (Appendix~\ref{app:thresholds}); softmax has no finite threshold.
In particular, on the same scores, sparsemax and normmax-2 always agree on whether a token has one or two contributors.

\paragraph{Balancing objective.}
For a microbatch of $M$ tokens, we use the Switch-style objective~\citep{fedus2022switch},
\begin{equation}
f_j=\frac1{MK}\sum_i\ind[j\in\mathcal S_i],\qquad
\bar p_j=\frac1M\sum_i p_{ij},\qquad
\mathcal L_{\mathrm{LB}}=n\sum_jf_j\bar p_j,
\label{eq:loads}
\end{equation}
where the selection frequencies $\bm f$ include fillers and are detached from differentiation.
Because $\mathcal L_{\mathrm{LB}}$ uses the full probabilities $\bm p_i$ rather than the selected mixture $\bm q_i$, its gradient depends on the map even at a shared coefficient; under a sparse map, an expert with $p_{ij}=0$ receives no balancing gradient from token $i$.

\subsection{Routing measurements}
\label{sec:metrics}
We characterize routing by how much probability mass survives selection, how many selected experts contribute, and how evenly assignments are spread over experts.
The \emph{discarded mass} of token $i$ is the mass of the unselected experts,
\begin{equation}
\delta_i=\sum_{j\notin\mathcal S_i}p_{ij}=1-m_i\in[0,\,1-K/n],
\label{eq:delta}
\end{equation}
and we report its average $\bar\delta$ over tokens.
Since renormalization moves exactly this mass onto the selected experts, $\delta_i$ is also the total variation distance between the routing distribution and its selected mixture, $\TV(\bm p_i,\bm q_i)=\tfrac12\|\bm p_i-\bm q_i\|_1=\delta_i$.
The \emph{number of contributors} is $K_{+,i}=|\mathcal T_i|\leq K$. 
Under top-2 selection, $K_{+,i} = 1$ whenever a selected expert has zero weight.
Finally, we compare two assignment histograms over experts: the \emph{nominal} load counts every selection, fillers included, and the \emph{contributor} load counts only positive-weight selections, each assignment adding $1/K$.
We report their coefficient of variation (CV) across experts~\citep{shazeer2017outrageously}, together with the standard deviation, since removing fillers also changes the mean.
Appendix~\ref{app:metrics} gives the aggregation details.

\section{Experimental setup}
\label{sec:setup}

\paragraph{Models and training.}
We train MoE language models at two scales with a Llama~3-style backbone~\citep{dubey2024llama3} in which every second feed-forward block is replaced by an MoE layer: a 300M model (8 layers, width 320, 4 MoE layers with 26 experts each) and a 1B model (9 layers, width 768, 4 MoE layers with 27 experts each).
Every token selects $K=2$ experts, following Mixtral's selection and weighted combination~\citep{jiang2024mixtral}, without capacity constraints or token dropping; the 1B model has $302$M active parameters.
The four maps share everything except $\pi$, including the balancing objective of Equation~\eqref{eq:loads} with coefficient $0.01$, averaged over MoE layers and evaluated on local microbatches.
A dense model of matching depth and width at each scale provides a reference with a larger active-parameter budget.
All models are trained on DCLM-baseline~\citep{li2024datacomp} for $10$B tokens ($19{,}073$ updates of $524{,}288$ tokens) with AdamW, cosine decay and bfloat16 mixed precision. Appendices~\ref{app:architecture} and~\ref{app:training} give the architecture details and the full training and evaluation recipe.

\paragraph{Evaluation.}
For the original matched comparison, we evaluate the final checkpoint of every run, which is also the one with the lowest DCLM validation loss.
Predictive quality is measured by validation loss, mean accuracy over eight OLMES tasks and LAMBADA, and LAMBADA word perplexity~\citep{gu2025olmes,paperno2016lambada}.
For each OLMES task and model, we use the higher accuracy of the multiple-choice and completion formulations, following the benchmark's recommendation~\citep[\S3.4]{gu2025olmes}.

Routing diagnostics use $61$ documents from the WikiText-103 test subset of Paloma~\citep{magnusson2024paloma} at context $2{,}048$, giving $285{,}107$ targets per MoE layer. 
All routing statistics are averaged with equal weight over the four MoE layers.
For every trained model we also record the router scores of all four layers on these documents, from which we compute the score spreads and top-2 gap distributions of \S\ref{sec:map-or-scores}.

\section{Results}
\label{sec:results}

We first check that the MoEs are competent models (\S\ref{sec:quality}), then ask whether the sparsity of the map survives training (\S\ref{sec:survive}), whether the differences between trained routers come from the map or from the scores learned under it (\S\ref{sec:map-or-scores}), and what the sparse maps buy at inference time (\S\ref{sec:budget}).
We report both scales and focus the discussion on the 1B models. The 300M models show the same qualitative routing patterns, with their training curves and load statistics in Appendix~\ref{app:small-scale}.

\subsection{Model quality}
\label{sec:quality}
\label{sec:quality-and-k}

\begin{table}[t]
\caption{Predictive quality of the 300M and 1B models: DCLM validation loss, perplexity and accuracies (\%) on LAMBADA (LMB) and the eight OLMES tasks (ARC-Easy, ARC-Challenge, CommonsenseQA, HellaSwag, OpenBookQA, PIQA, SocialIQA, WinoGrande).
The dense references have a larger active-parameter budget. 
}
\label{tab:quality-primary}
\label{tab:quality-primary-detailed}
\centering\small
\setlength{\tabcolsep}{2.4pt}
\begin{tabular}{lcccccccccccc}
\toprule
\bf Model & \bf Val. & \bf LMB (ppl) & \bf LMB & \bf ARC-E & \bf ARC-C & \bf CSQA & \bf HS & \bf OBQA & \bf PIQA & \bf SQA & \bf WG & \bf Avg. \\
\midrule
\textcolor{gray}{\textit{300M params.}} \\
Dense        & 3.411 & 44.8 & 31.8 & 46.7 & 28.6 & 37.2 & 30.0 & 31.8 & 60.3 & 42.3 & 50.0 & 39.9 \\
\cdashlinelr{1-13}
Softmax      & \bf 3.534 & 104.2 & 24.3 & 42.7 & 27.6 & 36.5 & 29.4 & 29.6 & 61.9 & 41.4 & 50.8 & \bf 38.2 \\
$1.5$-entmax & 3.536 & \bf 102.8 & 24.2 & 43.0 & 27.1 & 34.6 & 29.4 & 27.8 & 60.0 & 41.7 & 51.7 & 37.7 \\
Sparsemax    & 3.538 & 104.0 & 24.6 & 42.7 & 26.5 & 36.0 & 27.8 & 28.4 & 59.8 & 40.3 & 52.5 & 37.6 \\
Normmax-$2$  & 3.536 & 105.9 & 23.4 & 42.8 & 25.5 & 33.3 & 29.6 & 25.2 & 60.5 & 40.9 & 50.5 & 36.9 \\
\midrule
\textcolor{gray}{\textit{1B params.}} \\
Dense        & 3.103 & 18.8 & 41.6 & 56.0 & 29.3 & 47.3 & 39.0 & 33.8 & 63.9 & 42.7 & 51.3 & 45.0 \\
\cdashlinelr{1-13}
Softmax      & \bf 3.232 & \bf 28.9 & 36.3 & 50.1 & 28.6 & 43.7 & 34.2 & 33.8 & 64.1 & 42.9 & 51.5 & \bf 42.8 \\
$1.5$-entmax & 3.240 & 29.5 & 35.4 & 50.3 & 28.3 & 42.1 & 34.6 & 30.4 & 63.5 & 41.9 & 51.3 & 42.0 \\
Sparsemax    & 3.242 & 30.3 & 35.1 & 50.1 & 28.3 & 41.9 & 32.6 & 30.8 & 63.7 & 43.2 & 52.1 & 42.0 \\
Normmax-$2$  & 3.244 & 32.1 & 34.8 & 51.8 & 27.5 & 41.2 & 34.9 & 30.6 & 64.8 & 41.6 & 51.9 & 42.1 \\
\bottomrule
\end{tabular}
\end{table}

Table~\ref{tab:quality-primary} reports predictive quality.
All four MoEs are within $0.14$ nats of validation loss and $3$ points of mean accuracy of the dense reference, which has $3.3\times$ their active parameters, so the routers we study are those of reasonable language models rather than of a toy setting.
Among the MoEs, while softmax has the lowest validation loss and the highest average numbers, we note that the margins are small across all maps: $0.008$--$0.012$ in validation loss and $0.7$--$0.8$ in average points. 

Moreover, individual task leaders vary (normmax-2 leads ARC-Easy and PIQA, sparsemax leads SocialIQA and WinoGrande). 
These runs show no consistent quality advantage for the sparse maps. The rest of this section asks what they change in routing.

\subsection{Does the sparsity of the map survive training?}
\label{sec:survive}
\label{sec:routing-results}

\begin{figure}[t]
\centering
\includegraphics[width=0.9\linewidth]{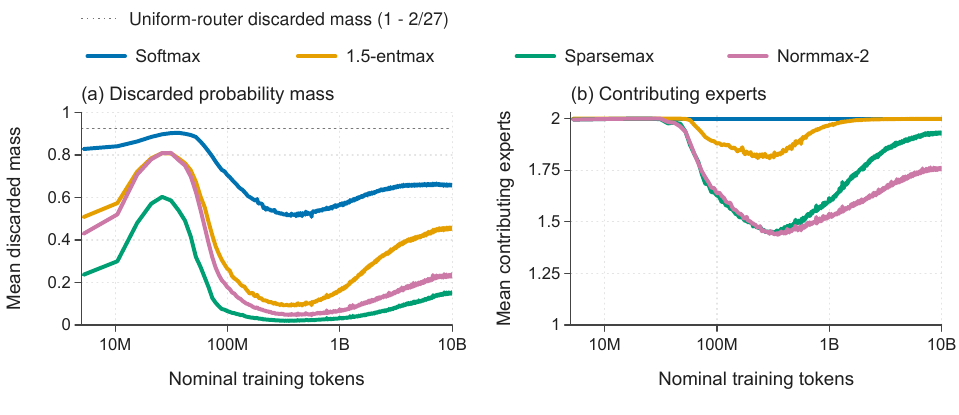}
\vspace*{-0.5cm}
\caption{Discarded probability mass (a) and number of contributing experts per token (b) during training of the 1B MoE models, measured on training batches. 
The horizontal axis is the number of training tokens (log scale). 
All maps sparsify early in training; entmax then returns to two contributors while sparsemax and normmax do not. 
The 300M curves are provided in Appendix~\ref{app:small-scale}. 
}
\label{fig:discarded}
\end{figure}

\begin{table}[t]
\caption{Routing statistics of the trained models.
Besides discarded mass $\bar\delta$ and activated experts $\overline K_+$, we also report the fraction of token--layer states with singleton support and with support of exactly two experts, the discarded mass conditional on support larger than two, the coefficient of variation of the expert loads counting every selection or only positive-weight selections (Contrib. CV), and the standard deviation of the contributor loads (Contrib. SD). Lower CV and SD indicate more even expert loads. Bold marks the lowest value at each scale, including ties at the displayed precision. 
Rows under softmax-based load balancing means that we used softmax probabilities for the auxiliary balancing loss.}
\label{tab:headline}
\centering\small
\begin{tabular}{l rrrrcccc}
\toprule
\bf Map & $\bar\delta$ & $\overline K_+$ & $s{=}1$ \% & $s{=}2$ \% & $\E[\delta\mid s{\geq}3]$ & \shortstack{Nominal\\CV $\downarrow$} & \shortstack{Contrib.\\CV $\downarrow$} & \shortstack{Contrib.\\SD $\downarrow$} \\
\midrule
\textcolor{gray}{\textit{300M params.}} \\
Softmax      & 0.566 & 2.000 & 0.000  & 0.000  & 0.566 & 0.404 & 0.404 & 0.016 \\
$1.5$-entmax & 0.311 & 1.997 & 0.295  & 0.388  & 0.314 & 0.400 & \textbf{0.400} & 0.015 \\
Sparsemax    & 0.099 & 1.904 & 9.598  & 28.174 & 0.160 & 0.434 & 0.450 & 0.017 \\
Normmax-$2$  & 0.146 & 1.769 & 23.144 & 16.228 & 0.241 & \textbf{0.367} & 0.404 & \textbf{0.014} \\
\cdashlinelr{1-9}
\textcolor{gray}{\textit{1B params.}} \\
Softmax      & 0.661 & 2.000 & 0.000  & 0.000  & 0.661 & 0.339 & 0.339 & \textbf{0.013} \\
$1.5$-entmax & 0.462 & 2.000 & 0.048  & 0.095  & 0.462 & \textbf{0.338} & \textbf{0.338} & \textbf{0.013} \\
Sparsemax    & 0.153 & 1.931 & 6.883  & 14.079 & 0.194 & 0.414 & 0.424 & 0.015 \\
Normmax-$2$  & 0.250 & 1.787 & 21.256 & 7.587  & 0.351 & 0.341 & 0.388 & \textbf{0.013} \\
\midrule
\multicolumn{9}{l}{\textcolor{gray}{\textit{300M params. with softmax-based load balancing}}} \\
$1.5$-entmax & 0.292 & 1.999 & 0.079 & 0.245 & 0.293 & 0.485 & 0.485 & 0.019 \\
Sparsemax & 0.078 & 1.921 & 7.941 & 30.019 & 0.126 & 0.643 & 0.668 & 0.025 \\
\bottomrule
\end{tabular}
\end{table}

Figure~\ref{fig:discarded} shows the two routing quantities during training.
All three sparse maps drop contributors early on, but entmax recovers to almost exactly two contributors per token, whereas sparsemax and normmax settle at $1.93$ and $1.79$.
At the end of training, reported in Table~\ref{tab:headline}, $1.5$-entmax discards $30\%$ less probability mass than softmax ($\bar\delta=0.462$ versus $0.661$) with a contributor count that is still $2.000$ to three decimals, sparsemax retains the most mass ($\bar\delta=0.153$), and normmax uses the fewest contributors ($\overline K_+=1.787$). 
The two quantities are governed by different regimes of the support size $s$, the number of experts with positive probability.
Under top-2 selection,
\begin{equation}
 \E[\delta]=\Pr(s\geq3)\;\E[\delta\mid s\geq3],
 \qquad \E[K_+]=2-\Pr(s=1),
 \label{eq:observed-support-decomposition}
\end{equation}
since only a support larger than $2$ can lose mass to truncation, and only a singleton support can reduce participation below $2$.
Entmax sits almost entirely in the first regime: $99.9\%$ of its states have $s\geq3$, so it retains more mass than softmax while keeping both experts.
Normmax sits in the second: $21.3\%$ of its states are singletons, versus $6.9\%$ for sparsemax, which instead has more states with exactly two positive probabilities ($14.1\%$ versus $7.6\%$), retaining all mass with two contributors.
Sparsity in the routing distribution and sparsity in participation are therefore different outcomes, and a map can deliver one without the other.
Results for the 300M models, provided in Appendix~\ref{app:small-scale}, share the same findings.

\subsection{Is it the map, or the scores learned under it?}
\label{sec:map-or-scores}

The trained models differ both in the map and in the scores that the map receives, and the two are easy to separate because each map has a fixed sparsification threshold (see \S\ref{sec:maps}). Concretely, a selected expert is dropped exactly when the top-2 score gap is at least $2$ for $1.5$-entmax and at least $1$ for sparsemax and normmax-2.
The singleton rate of a trained model is therefore nothing but the fraction of its learned gaps that reach its map's threshold.
Figure~\ref{fig:score-gaps} shows these gap distributions for the 1B models, with both thresholds marked in every panel.
The numerical spreads and quantiles at both scales are retained in Table~\ref{tab:score-gaps} in Appendix~\ref{app:score-distributions}.

\begin{figure}[t]
\centering
\includegraphics[width=\linewidth]{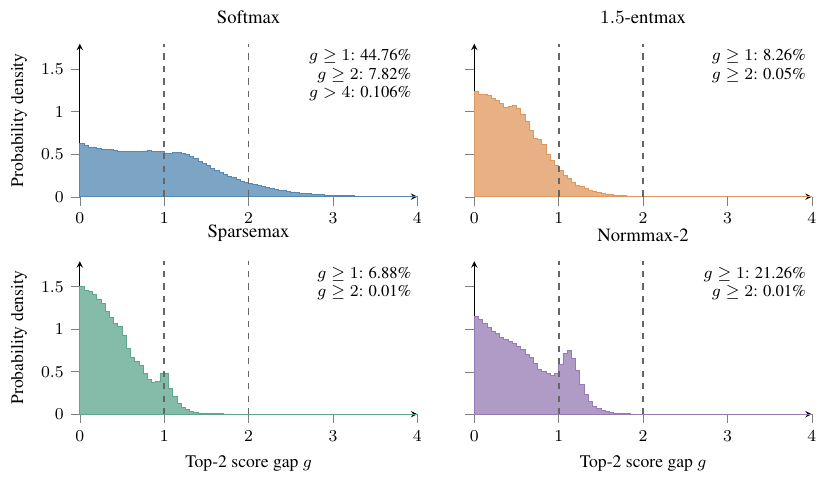}
\caption{Top-2 score gaps of the trained 1B routers. 
Each panel pools $285{,}107$ targets from each of the four MoE layers with equal weight.
Gray dashed lines mark the singleton thresholds $g=1$ for sparsemax and normmax-2 and $g=2$ for $1.5$-entmax. 
Softmax has no finite singleton threshold. 
Panel annotations give the fractions at or above each threshold, computed on the full sample, including density beyond $g=4$ for softmax.
The peaks near $g=1$ for sparsemax and normmax-2 are consistent with gradient saturation: for $g>1$, these maps are locally constant and gradients through the routing probabilities vanish.
}
\label{fig:score-gaps}
\end{figure}

The learned scores turn out to matter as much as the map.
All three sparse routers learned markedly smaller score spreads than the softmax router, with an RMS between $0.32$ and $0.38$ against $0.86$. Under a sparse map, this scale determines how often a score gap reaches the singleton threshold.
For $1.5$-entmax the consequence is decisive.
Its $99$th-percentile gap is $1.45$, well below the threshold of $2$, so the trained router produces a singleton in only $0.05\%$ of states, even though $7.8\%$ of the softmax router's gaps reach that threshold.
Sparsemax and normmax-2 tell a complementary story.
The two maps share the threshold $1$, so on any given score vector they always agree on whether a token keeps one or two contributors, and yet the trained models differ by $0.14$ contributors per token.
The difference can only come from the scores the two routers learned, and it does: the median gap of the normmax-2 router is $0.61$ against $0.40$ for the sparsemax router, and its gaps reach $1$ three times as often.

Taken together, $1.5$-entmax keeps both experts because its threshold is the highest of the three maps and because its router learned scores that stay under it, while sparsemax and normmax-2 differ only in what their routers learned.
Participation is thus set jointly by the map's threshold and by the learned score distribution, and the router adapts the latter to the former.

\paragraph{Depth.}
The layer averages hide a strong depth dependence. In Table~\ref{tab:layerwise} in the appendix, we note that normmax-2 routes $54\%$ of tokens to a single expert in the first MoE layer but only $7\%$ in the last. 
Sparsemax shows the same pattern with smaller magnitude ($19\%$ to $4\%$), and $1.5$-entmax has essentially no singletons anywhere.
Early layers learn the sparsest routers, which suggests that the useful degree of participation is itself depth-dependent and provides a further reason not to set $K$ arbitrarily.\looseness=-1

\subsection{What the sparse maps buy at inference time}
\label{sec:budget}

\begin{table}[t]
\caption{Robustness to the inference-time budget. NLL (nats/target) on the retained DCLM diagnostic sample when a model trained with $K{=}2$ is run, with frozen weights, with $K\in\{1,2,4,8\}$ selected experts; $\Delta$ is the increase from $K{=}2$ to $K{=}8$. Retention and contributor counts for every setting are in Appendix~\ref{app:frozen-k}.}
\label{tab:frozen-k}
\centering\small
\setlength{\tabcolsep}{4pt}
\begin{tabular}{lccccc ccccc}
\toprule
& \multicolumn{5}{c}{300M} & \multicolumn{5}{c}{1B} \\
\cmidrule(lr){2-6} \cmidrule(lr){7-11}
\bf Map & $K{=}1$ & $K{=}2$ & $K{=}4$ & $K{=}8$ & $\Delta$ & $K{=}1$ & $K{=}2$ & $K{=}4$ & $K{=}8$ & $\Delta$ \\
\midrule
Softmax      & 3.703 & 3.560 & 3.623 & 3.913 & $+0.354$ & 3.360 & 3.254 & 3.328 & 3.835 & $+0.582$ \\
$1.5$-entmax & 3.711 & 3.562 & 3.598 & 3.693 & $+0.131$ & 3.363 & 3.261 & 3.306 & 3.512 & $+0.250$ \\
Sparsemax    & 3.715 & 3.565 & 3.566 & 3.568 & $\mathbf{+0.004}$ & 3.365 & 3.263 & 3.274 & 3.284 & $\mathbf{+0.021}$ \\
Normmax-$2$  & 3.703 & 3.562 & 3.574 & 3.586 & $+0.024$ & 3.368 & 3.266 & 3.286 & 3.329 & $+0.064$ \\
\bottomrule
\end{tabular}
\end{table}

Although they do not improve quality, the sparse maps make the trained model much less sensitive to selecting more experts at inference.
We carry that analysis in Table~\ref{tab:frozen-k}.
When the models trained with $K{=}2$ are run with $K{=}8$ and frozen weights, softmax loses $0.58$ nats, because it is forced to spread weight over six experts it never learned to combine.
Sparsemax loses only $0.02$ nats, since it keeps assigning zero weight to most of the extra slots and uses $4.3$ experts on average at $K{=}8$, and normmax-2 and $1.5$-entmax fall in between.
The robustness is one-sided: reducing the budget to $K{=}1$ costs every map about $0.10$ nats.
For every map the loss is lowest at the trained budget, so selecting more experts to retain more mass does not help either, which confirms that retention, participation, and quality are three distinct quantities.

\paragraph{Load balance.}
\label{sec:balance}
Counting contributors rather than nominal selections also changes the picture of expert balance, as the last three columns of Table~\ref{tab:headline} show.
The $1.5$-entmax router has the lowest load CV under both conventions ($0.338$), and because almost all of its selections carry positive weight, excluding the fillers changes nothing for it.
Normmax-2 is different: its nominal CV is only slightly above softmax's ($0.341$ against $0.339$), but once the fillers are excluded its contributor CV rises to $0.388$, so its load is less balanced than the nominal histogram suggests, and sparsemax is the least balanced under either convention.
The balancing objective is one possible source of these differences.
Because it uses the full distribution, an expert with zero probability receives no direct balancing gradient under a sparse map, and changing the score scale also changes the probabilities seen by this objective.

To examine the balancing mechanism, we trained 300M $1.5$-entmax and sparsemax routers with $\softmax(\bm z)$ used only in the auxiliary loss. 
Routing and expert combination still use their respective sparse maps. For $1.5$-entmax, nominal and contributor CV both rise from $0.400$ to $0.485$, while centered score RMS rises from $0.517$ to $0.603$ and the singleton fraction falls from $0.295\%$ to $0.079\%$. Sparsemax shows the same pattern. Nominal CV rises from $0.434$ to $0.643$ and contributor CV from $0.450$ to $0.668$ (Table~\ref{tab:headline}), while centered score RMS rises from $0.413$ to $0.442$ and the singleton fraction falls from $9.598\%$ to $7.941\%$. Since higher CV means a less even distribution of traffic across experts, the dense auxiliary probability does not improve load balance in either run. Figure~\ref{fig:score-gaps-auxsoftmax} shows why fewer tokens become singletons: the entmax tail near $g=2$ thins, and the sparsemax peak just above $g=1$ becomes smaller. More tokens therefore keep both selected experts, but their contributions are distributed less evenly across the expert pool. Per-token participation and load balance move in opposite directions in both runs, which does not support the simple explanation that sparsemax's imbalance comes solely from zero-probability experts receiving no direct balancing gradient.\looseness=-1

\section{Related work}
\label{sec:related}

\paragraph{Sparse maps and differentiable gates in MoE routers.}
Softmax followed by top-$K$ remains the dominant router~\citep{shazeer2017outrageously,fedus2022switch,jiang2024mixtral}, with sigmoid scores renormalized among the selected experts as the main production alternative~\citep{deepseekv3}.
Sparse probability maps have been used as routers before. For example, Multilinear MoE compares entmax with softmax over factorized experts~\citep{oldfield2024multilinear}, DUAL studies sparsemax under routing constraints~\citep{thaman2025dual}, and ReMoE replaces the map by rectified gates with a separate sparsity controller~\citep{wang2025remoe}.
Differentiable relaxations of the selection step itself include DSelect-k, a smooth gate with explicit cardinality control~\citep{hazimeh2021dselect}, the sparse top-$k$ operators of \citet{sander2023fast}, Soft MoE, which replaces discrete routing by soft slot assignment~\citep{puigcerver2023sparse}, and, more recently, SoftMoE, a truncated LapSum relaxation with learned per-layer budgets~\citep{zasada2026softmoe}, and ProbMoE, which samples cardinality-constrained subsets and differentiates through inclusion marginals~\citep{zhao2026probmoe}.
These works propose new routers and evaluate them end-to-end, whereas we keep the router fixed to top-2 and ask what a sparse map delivers once the router has been trained with it, and how the score distribution it learns interacts with the map's sparsification threshold (\S\ref{sec:map-or-scores}).

\paragraph{Load balancing and routing geometry.}
Balancing objectives range from the Switch-style loss we use~\citep{fedus2022switch} to balanced assignment~\citep{lewis2021base,zhou2022expertchoice}, loss-free bias adjustment~\citep{wang2024lossfree} and $\phi$-Balancing, which targets population-level balance through a convex potential and mirror descent~\citep{ugPhi2026}.
\citet{ugYoon2026} show that the task gradient reaches only the selected experts and that balancing provides no token-level counterfactual signal, and \citet{tranhuu2026discontinuities} analyze the switching surfaces of top-$K$ routing and propose smoothing near them.
Our observation that the balancing objective interacts with the map through the full distribution (\S\ref{sec:budget}) is complementary: it concerns which experts the balancing gradient can reach under a sparse map.

\paragraph{Sparse probability transformations.}
Sparsemax~\citep{martins2016softmax} and $\alpha$-entmax~\citep{peters2019sparse} are regularized argmax maps in the Fenchel--Young framework of \citet{blondel2020learning}, which also covers the norm entropies behind normmax~\citep{santos2024sparse,santos2025hopfield}.
Their sparsity has been exploited in attention, where the support size is known to depend on the scale of the scores: \citet{correia-etal-2019-adaptively} learn $\alpha$ per head, and \citet{vasylenko2026longcontext} learn a temperature so that $\alpha$-entmax attention interpolates between sparse and dense regimes as the context grows.
Our results transfer this observation to routing, where the score scale learned by the router, together with the map's threshold, sets participation. This suggests that learned scales or shapes are the right knob for adaptive MoE routing.

\section{Conclusions}
\label{sec:conclusion}

We studied whether the sparsity of a probability map survives training in a top-2 MoE router.
Across matched 300M and 1B models, the answer is that it survives only partly, and never as a property of the map alone.
The $1.5$-entmax router retains more probability mass than softmax while keeping both selected experts, the sparsemax router retains the most mass, and the normmax-2 router uses the fewest contributors.
These behaviors are set jointly by the map's sparsification threshold and by the score distribution the router learns under it.
The $1.5$-entmax router keeps both experts because its learned scores have half the spread of softmax's, which keeps its top-2 gaps below the threshold, whereas sparsemax and normmax-2, which sparsify at the same gap, differ only because their routers learned different gap distributions.
Retention, participation and predictive quality also turn out to be three distinct quantities: none of the sparse maps improves validation loss over softmax, yet all of them make the trained model less sensitive than softmax to increasing the inference-time budget.

Our results also say where a sparse router should be designed, rather than merely swapped in.
Because the router co-adapts its score scale to the map, that scale is a modelling choice that we currently leave to the optimizer, and it would be better learned explicitly, as is already done for sparse attention~\citep{correia-etal-2019-adaptively,vasylenko2026longcontext}.
A router whose scale is under control could then be trusted to choose its own budget: the sparse maps already assign zero weight to most of the extra slots when $K$ is raised, so selecting every expert in the support instead of a fixed top-$K$, capped by an at-most-$k$ operator, is a small step from what the trained models already do.
The balancing objective is another design choice to test: its gradients depend on the map, so changing the probabilities used for balancing may alter the score scales and expert loads learned during training.
We leave these directions to future work.

\paragraph{Limitations.}
Our matched comparison uses a single training seed per configuration, so the small quality differences between maps (at most $0.012$ nats of validation loss) do not establish a ranking.
We test two members of the entmax family ($\alpha=1.5$ and sparsemax, $\alpha=2$), one normmax ($\gamma=2$) and one balancing objective, whose probabilities we vary in a single 300M ablation.
Contributor counts measure expert participation, not compute. As such, whether zero-weight experts are skipped, and what that saves, depends on the dispatch implementation, which we do not measure.
We did not verify that the diagnostic samples are disjoint from the training data.
Finally, the score-scale analysis is descriptive: it relates each router's singleton rate to its learned gap distribution but does not identify the training mechanism that produced the smaller scales.

\label{page:main-end}
\section*{Acknowledgments}
\label{app:acks}
We thank the SARDINE Lab members for reviewing this paper and providing helpful feedback.

This work was partially supported by Portuguese national funds through FCT under project UIDB/50021/2020.
Additionally, the work was partially supported by the project Center for Responsible AI, reference C628696807-00454142. 

\section*{AI Use Statement}
In this work, we used generative AI tools to assist in writing the proofs and fixing notation issues.
We also used generative AI tools for summarizing and analyzing existing literature, searching for information, discovering literature beyond our manual search, and editing and polishing the paper for readability.
We used generative AI tools to write and edit software code.
All uses of AI tools were accompanied by a careful review process. %
In particular, we manually checked the proofs and made them more accessible, and LLM-generated code was verified and tested for correctness.
We take responsibility for the final content of this work, including any text, claims, or artifacts produced with the aid of generative AI.

\section*{Reproducibility statement}
Appendix~\ref{app:empirical} gives the architecture shapes and parameter counts (\ref{app:architecture}), the full training recipe including optimizer settings, schedule, batch composition and the balancing objective (\ref{app:training}), and the evaluation protocol, including the quality metrics, the exact definition of every routing diagnostic and the inference-time budget experiment (\ref{app:evaluation}); Appendix~\ref{app:thresholds} derives the singleton thresholds and defines the score-scale analysis, and Appendix~\ref{app:extra-results} collects the per-layer and 300M results.
In the original matched comparison, all models at a given scale share the same data, seed and hyperparameters and differ only in the probability map. 
We will release the training and evaluation code, the configuration files of every run, the recorded router scores of every trained model used in \S\ref{sec:map-or-scores}, and a numerical supplement with the routing, score-scale and quality summaries at both scales, so that every table and figure in the paper can be regenerated.

\bibliographystyle{plainnat}
\bibliography{references,references_added}
\clearpage
\appendix

\section{Experimental details}
\label{app:empirical}

\FloatBarrier
\subsection{Architecture and parameter accounting}
\label{app:architecture}
All models use an untied input embedding and language-model head with vocabulary size $128{,}256$, pre-RMSNorm blocks and a final RMSNorm, grouped-query attention, rotary embeddings with base $500{,}000$, and bias-free SwiGLU feed-forward blocks~\citep{dubey2024llama3}.
The configured maximum context is $8{,}192$; training and the routing diagnostics use $2{,}048$ tokens.
The norm epsilon is $10^{-5}$, dropout is zero and the initialization standard deviation is $0.02$.
Table~\ref{tab:architectures} gives the shapes and parameter counts.

\begin{table}[!tbp]
\caption{Architectures and parameter counts. FF width is the hidden width of each SwiGLU branch; MoE layers replace every second feed-forward block. Nominal active parameters include the full embedding tensors and count two experts per MoE layer, even when one combine weight is zero.}
\label{tab:architectures}
\centering\small
\setlength{\tabcolsep}{4.5pt}
\begin{tabular}{lrrrrrrrrr}
\toprule
& & & Heads & FF & MoE & & \multicolumn{3}{c}{Parameters} \\
\cmidrule(lr){8-10}
Model & Layers & Width & Q/KV & width & layers & Experts & Total & Active & Router \\
\midrule
Dense $283$M & 10 & 768  & 12/3 & 3072 & --  & --  & 283M  & 283M  & --  \\
MoE $300$M   & 8  & 320  & 8/2  & 2048 & 4   & 26  & 297M  & 108M  & 33K \\
Dense $1$B   & 18 & 1536 & 24/6 & 6144 & --  & --  & 1.01B & 1.01B & --  \\
MoE $1$B     & 9  & 768  & 12/3 & 3072 & 4   & 27  & 1.01B & 302M  & 83K \\
\bottomrule
\end{tabular}
\end{table}

With $L$ layers, $L_{\mathrm M}$ MoE layers, width $d$, SwiGLU width $h$, vocabulary $V$ and $d_{\mathrm{KV}}=d\,n_{\mathrm{KV}}/n_{\mathrm Q}$, the parameter count is
\begin{align*}
P_{\mathrm{total}}={}&2Vd+L(2d^2+2dd_{\mathrm{KV}}+2d)+d+3dh[(L-L_{\mathrm M})+L_{\mathrm M}n]+L_{\mathrm M}dn,
\end{align*}
where the last term counts the router matrices.
Replacing $n$ by $K=2$ in the expert term gives the nominal active count, which includes the full embedding tensors and counts two experts per MoE layer even when one combine weight is zero.
The four maps add no parameters at fixed $\alpha$ and $\gamma$.
Nominal active parameters do not equate training or inference FLOPs across dense and MoE models.

\FloatBarrier
\subsection{Training recipe}
\label{app:training}
Training uses DCLM-baseline~\citep{li2024datacomp} with the Meta-Llama-3 tokenizer and seed $42$.
AdamW has peak learning rate $3\times10^{-4}$, minimum rate $3\times10^{-6}$, betas $(0.9,0.95)$, weight decay $0.1$ and gradient clipping at norm $1.0$; the learning rate warms up linearly for $500$ updates and then follows cosine decay to the minimum at update $19{,}073$.
Mixed precision is bfloat16.
Every run uses four GPUs with batch size $4$ per process and $16$ gradient-accumulation microbatches, i.e.\ $256$ sequences or $524{,}288$ tokens per optimizer update, and $19{,}073$ updates for the nominal $10$B-token budget.
The MoE runs use the Switch-style balancing loss of Equation~\eqref{eq:loads}, evaluated on each local microbatch of $8{,}192$ tokens with coefficient $0.01$ and averaged over the four MoE layers; there is no router z-loss, and the dense references have no balancing loss.
Validation cross-entropy on a $10$M-token cache is logged every $500$ updates and at the final update; for all ten models the minimum is at the final update, which is the checkpoint we evaluate.

\FloatBarrier
\subsection{Evaluation protocol}
\label{app:evaluation}
\label{app:quality-metrics}

\paragraph{Quality metrics.}
Validation loss and the diagnostic-domain losses exclude the balancing loss; every unmasked $2{,}048$-token chunk contributes $2{,}047$ next-token targets.
The eight OLMES tasks are ARC-Challenge, ARC-Easy, CommonsenseQA, HellaSwag, OpenBookQA, PIQA, SocialIQA and WinoGrande~\citep{gu2025olmes}, each evaluated in the multiple-choice and completion formulations with the OLMES scoring rules. We report the higher of the two accuracies per model and task, as recommended in \citet[\S3.4]{gu2025olmes}.
LAMBADA~\citep{paperno2016lambada} counts a prediction as correct only if every token of the final word is predicted correctly, and its word perplexity uses the full target-word log probability.
The mean accuracy weights the eight OLMES tasks and LAMBADA equally.
Evaluation runs in FP32 with autocast and TF32 disabled.
Table~\ref{tab:quality-primary} in the main text gives the per-task results at both scales.

\paragraph{Diagnostic samples.}
Domain losses use $489$ packed $2{,}048$-token blocks of retained DCLM ($1{,}000{,}983$ scored targets) and the $61$ Paloma WikiText-103 test documents ($285{,}107$ targets).
The inference-budget evaluation uses the retained DCLM sample.
All Paloma diagnostics reported here use the test sample at context $2{,}048$.

\paragraph{Routing diagnostics.}
\label{app:metrics}
\label{app:checkpoint-diagnostic}
For every MoE layer and token batch, the nominal load assigns $1/K$ to each selected index, fillers included, and the contributor load only to selections with strictly positive weight; histograms are divided by the number of routed tokens, and the CV is the population standard deviation divided by the mean, averaged over layers.
The router computes scores and probabilities in FP32 under mixed precision and casts the normalized combine weights $p_{ij}/m_i$ to the expert-output dtype.
The training curves of Figures~\ref{fig:discarded} and~\ref{fig:discarded-300m} use every logged training batch at a cadence of $10$ updates.
The checkpoint diagnostic runs every trained model in FP32 on the $61$ Paloma test documents with identical inputs, contexts and target order, giving $285{,}107$ targets per layer and $1{,}140{,}428$ target--layer events per model; the recorded router scores of all four layers are the data from which the spreads and gap quantiles of \S\ref{sec:map-or-scores} and Appendix~\ref{app:score-scale} are computed.
Targets and layers are dependent observations from one checkpoint per map, not training replications.

\paragraph{Inference-time budget.}
For the experiment of \S\ref{sec:budget} we take each trained model with frozen weights and run it with $K\in\{1,2,4,8\}$ selected experts on the retained DCLM diagnostic sample, keeping the map and renormalizing the selected probabilities as in Equation~\eqref{eq:combine}.
Besides the loss, we record the retained mass $1-\delta_i$ and the number of contributors $K_{+,i}$ of every target--layer event and average them over targets and layers; Table~\ref{tab:frozen-k} in the main text reports the loss and Table~\ref{tab:sparsity-transfer} in Appendix~\ref{app:extra-results} the full set.

\FloatBarrier
\section{Singleton thresholds and the score-scale analysis}
\label{app:thresholds}
This appendix derives the singleton thresholds quoted in \S\ref{sec:maps} from the optimality conditions of each map, records the closed forms that follow from them, and then defines the score-scale quantities used in \S\ref{sec:map-or-scores}.
Throughout, the token index is suppressed, $\bm z\in\R^n$ is a score vector with sorted entries $z_{(1)}\geq z_{(2)}\geq\dots\geq z_{(n)}$, $g=z_{(1)}-z_{(2)}$ is its top-2 gap, and $\mathcal I=\{j:p_j>0\}$ is the support of $\bm p=\pi(\bm z)$, with size $s=|\mathcal I|$.

\subsection{Optimality conditions}
\label{app:kkt}
Every map we use solves the regularized argmax problem of Equation~\eqref{eq:regularized},
\begin{equation}
\bm p=\argmax_{\bm p\in\simplex}\ \bm p^\top\bm z+H(\bm p),
\label{eq:app-regularized}
\end{equation}
for a concave entropy $H$.
Introducing a multiplier $\theta\in\R$ for the constraint $\bm 1^\top\bm p=1$ and multipliers $\mu_j\geq0$ for the constraints $p_j\geq0$, the Lagrangian is
\begin{equation}
\mathcal L(\bm p,\theta,\bm\mu)=\bm p^\top\bm z+H(\bm p)-\theta\,(\bm 1^\top\bm p-1)+\bm\mu^\top\bm p,
\end{equation}
and the Karush--Kuhn--Tucker (KKT) conditions read, for every expert $j$,
\begin{equation}
z_j+\partial_jH(\bm p)-\theta+\mu_j=0,\qquad \mu_j\geq0,\qquad \mu_jp_j=0,\qquad \bm p\in\simplex .
\label{eq:kkt}
\end{equation}
Because the objective is concave and the simplex is convex, these conditions are necessary and sufficient, and for the three entropies below the maximizer is unique~\citep{blondel2020learning,santos2025hopfield}.
Splitting Equation~\eqref{eq:kkt} by whether $p_j$ is positive gives the two conditions we use repeatedly:
\begin{equation}
z_j+\partial_jH(\bm p)=\theta\quad\text{for } j\in\mathcal I,
\qquad\qquad
z_j+\partial_jH(\bm p)\leq\theta\quad\text{for } j\notin\mathcal I .
\label{eq:kkt-split}
\end{equation}
The first says that on the support the score, shifted by the entropy gradient, is constant; the second says that an expert is left out exactly when its shifted score does not reach that constant.

\paragraph{Softmax.}
With the Shannon entropy $H^{\mathrm S}(\bm p)=-\sum_jp_j\log p_j$ we have $\partial_jH^{\mathrm S}(\bm p)=-\log p_j-1$, which diverges to $+\infty$ as $p_j\to0$.
The inactive condition in Equation~\eqref{eq:kkt-split} can therefore never hold, so every expert is on the support, and the active condition gives $\log p_j=z_j-1-\theta$, i.e.\ $p_j=e^{z_j}/\sum_ke^{z_k}$ after choosing $\theta$ so that the probabilities sum to one.

\paragraph{$\alpha$-entmax.}
The Tsallis entropy $H^{\mathrm{Ts}}_\alpha(\bm p)=\big(1-\sum_jp_j^\alpha\big)/(\alpha(\alpha-1))$ has gradient
\begin{equation}
\partial_jH^{\mathrm{Ts}}_\alpha(\bm p)=-\frac{\alpha p_j^{\alpha-1}}{\alpha(\alpha-1)}=-\frac{p_j^{\alpha-1}}{\alpha-1},
\end{equation}
which is finite at $p_j=0$ for every $\alpha>1$, so experts can leave the support.
Substituting in Equation~\eqref{eq:kkt-split}, the active condition becomes $z_j-p_j^{\alpha-1}/(\alpha-1)=\theta$, that is
\begin{equation}
p_j^{\alpha-1}=(\alpha-1)(z_j-\theta)
\quad\Longleftrightarrow\quad
p_j=\big[(\alpha-1)z_j-(\alpha-1)\theta\big]^{1/(\alpha-1)}\quad\text{for } j\in\mathcal I,
\end{equation}
and the inactive condition becomes $z_j\leq\theta$.
Both are captured by the single expression
\begin{equation}
p_j=\big[(\alpha-1)z_j-\tau\big]_+^{1/(\alpha-1)},\qquad \tau=(\alpha-1)\theta,
\label{eq:app-entmax}
\end{equation}
which is Equation~\eqref{eq:entmax}: the bracket is positive exactly for the experts with $z_j>\theta$, and those are the support.
The threshold $\tau$ is fixed by the simplex constraint,
\begin{equation}
\sum_j\big[(\alpha-1)z_j-\tau\big]_+^{1/(\alpha-1)}=1 .
\label{eq:app-entmax-norm}
\end{equation}
The left-hand side is continuous and strictly decreasing in $\tau$ wherever it is positive, tends to $+\infty$ as $\tau\to-\infty$ and vanishes for $\tau\geq(\alpha-1)z_{(1)}$, so Equation~\eqref{eq:app-entmax-norm} has exactly one root, which can always be found by bisection.
For the two values of $\alpha$ we use, the root also has a sorting-based closed form.
If the support is the top $k$ experts, then for sparsemax ($\alpha=2$) Equation~\eqref{eq:app-entmax-norm} reads $\sum_{j\leq k}(z_{(j)}-\tau)=1$, so
\begin{equation}
\tau_k=\frac{1}{k}\Big(\sum_{j\leq k}z_{(j)}-1\Big),
\label{eq:app-sparsemax-tau}
\end{equation}
and for $1.5$-entmax ($\alpha=3/2$, so that $(\alpha-1)=1/2$ and $1/(\alpha-1)=2$) it reads $\sum_{j\leq k}(z_{(j)}/2-\tau)^2=1$.
Writing $m_k$ and $v_k$ for the mean and the variance of the $k$ halved scores $z_{(1)}/2,\dots,z_{(k)}/2$, we have $\sum_{j\leq k}(z_{(j)}/2-\tau)^2=k\big[(m_k-\tau)^2+v_k\big]$, and the root that keeps the probabilities positive is
\begin{equation}
\tau_k=m_k-\sqrt{\frac{1-k\,v_k}{k}} .
\label{eq:app-entmax15-tau}
\end{equation}
In both cases the support size is the unique $k$ for which the candidate $\tau_k$ is consistent with Equation~\eqref{eq:kkt-split}, i.e.\ the $k$ largest bracket arguments are positive and the remaining ones are not; after one sort this is a linear scan over $k$~\citep{martins2016softmax,peters2019sparse}.
In our implementation $1.5$-entmax uses this exact algorithm, and sparsemax and normmax-2 solve their threshold equations by bisection.

\paragraph{$\gamma$-normmax.}
The norm entropy $H^{\mathrm N}_\gamma(\bm p)=1-\|\bm p\|_\gamma$ has gradient
\begin{equation}
\partial_jH^{\mathrm N}_\gamma(\bm p)=-\frac{p_j^{\gamma-1}}{\|\bm p\|_\gamma^{\gamma-1}}=-\Big(\frac{p_j}{R}\Big)^{\gamma-1},\qquad R=\|\bm p\|_\gamma,
\end{equation}
which again vanishes at $p_j=0$ for $\gamma>1$.
Equation~\eqref{eq:kkt-split} therefore gives $(p_j/R)^{\gamma-1}=z_j-\theta$ on the support and $z_j\leq\theta$ off it, so
\begin{equation}
p_j=R\,[z_j-\theta]_+^{1/(\gamma-1)} .
\label{eq:app-normmax-raw}
\end{equation}
Summing Equation~\eqref{eq:app-normmax-raw} over $j$ and using $\bm 1^\top\bm p=1$ identifies $R=1/\sum_k[z_k-\theta]_+^{1/(\gamma-1)}$, which turns Equation~\eqref{eq:app-normmax-raw} into the ratio form of Equation~\eqref{eq:normmax} with $\tau=\theta$.
Substituting Equation~\eqref{eq:app-normmax-raw} back into the definition $R^\gamma=\sum_jp_j^\gamma$ and dividing by $R^\gamma$ gives the normalization $\sum_j[z_j-\theta]_+^{\gamma/(\gamma-1)}=1$ stated in Equation~\eqref{eq:normmax}.
As for entmax, the left-hand side of this normalization is strictly decreasing in $\theta$, so the threshold is unique and can be found by bisection; for $\gamma=2$ it reads $\sum_j(z_j-\theta)_+^2=1$ and admits the same sorted closed form as Equation~\eqref{eq:app-entmax15-tau}, with the raw scores in place of the halved ones.

\paragraph{Order preservation.}
In each of the three cases, the active condition has the form $z_j-\theta=\phi(p_j)$ for a strictly increasing function $\phi$ with $\phi(0)=0$ (namely $\phi(p)=p^{\alpha-1}/(\alpha-1)$ for entmax and $\phi(p)=(p/R)^{\gamma-1}$ for normmax), and the inactive condition is $z_j-\theta\leq0$.
Hence $p_j$ is a nondecreasing function of $z_j$: larger scores receive larger probabilities, equal scores receive equal probabilities, and every expert with $p_j>0$ has a larger score than every expert with $p_j=0$.
Consequently the top-$K$ experts by probability are the top-$K$ by score (ties broken by the same rule), which is why the selection $\mathcal S$ of Equation~\eqref{eq:combine} and all the statements below can be phrased in terms of scores alone.

\subsection{Singleton thresholds}
\label{app:singleton}
\begin{proposition}[Singleton thresholds]
\label{prop:thresholds}
Let $g=z_{(1)}-z_{(2)}$.
The support of $\alpha$-entmax is the singleton $\{(1)\}$ if and only if $g\geq1/(\alpha-1)$; the support of $\gamma$-normmax is $\{(1)\}$ if and only if $g\geq1$, for every $\gamma>1$; and the support of softmax is never a singleton.
In particular the thresholds are $2$ for $1.5$-entmax and $1$ for both sparsemax and normmax-2.
\end{proposition}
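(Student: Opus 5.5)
The plan is to use the split KKT conditions of Equation~\eqref{eq:kkt-split}, together with the closed forms derived above. A candidate $\bm p=\bm e_{(1)}$, meaning all mass on the top expert, is the map's output if and only if there is a multiplier $\theta$ satisfying the active condition at $(1)$ and the inactive condition at every other expert. Uniqueness of the maximizer (noted after Equation~\eqref{eq:kkt}) makes this characterization exact in both directions. By order preservation, the binding inactive constraint is the one at $(2)$, so every case reduces to a single inequality in $g$.

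For $\alpha$-entmax, set $p_{(1)}=1$ in the active condition. This gives $1/(\alpha-1)=z_{(1)}-\theta$, i.e.\ $\theta=z_{(1)}-1/(\alpha-1)$. The inactive condition for every $j\neq(1)$ reads $z_j\le\theta$, which holds for all such $j$ exactly when $z_{(2)}\le z_{(1)}-1/(\alpha-1)$, i.e.\ $g\ge1/(\alpha-1)$.
\begin{itemize}
\item Conversely, if the support is $\{(1)\}$, the normalization of Equation~\eqref{eq:app-entmax-norm} forces $p_{(1)}=1$. This pins down $\theta$ as above, and the inactive condition at $(2)$ gives $g\ge1/(\alpha-1)$.
\item As a cross-check, Equation~\eqref{eq:app-sparsemax-tau} with $k=1$ gives $\tau_1=z_{(1)}-1$ for sparsemax. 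Consistency requires $z_{(2)}-\tau_1\le0$, i.e.\ $g\ge1$.
\end{itemize}

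For $\gamma$-normmax, a singleton support gives $R=\|\bm e_{(1)}\|_\gamma=1$ and $p_{(1)}=1$. The active condition $(p_{(1)}/R)^{\gamma-1}=z_{(1)}-\theta$ then yields $\theta=z_{(1)}-1$, independently of $\gamma$. Equivalently, the normalization $\sum_j[z_j-\theta]_+^{\gamma/(\gamma-1)}=1$ with one active term gives the same $\theta$. The inactive condition at $(2)$ is $z_{(2)}\le z_{(1)}-1$, i.e.\ $g\ge1$. The converse runs as in the entmax case: if $g\ge1$, the pair $(\bm e_{(1)},\theta=z_{(1)}-1)$ satisfies Equation~\eqref{eq:kkt-split}, so by uniqueness it is the output.

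For softmax, the argument is already in the KKT discussion: $\partial_jH^{\mathrm S}\to+\infty$ as $p_j\to0$, so no expert can be inactive. Directly, $p_j=e^{z_j}/\sum_k e^{z_k}>0$ for finite scores, so the support is never a singleton. Specializing the entmax result to $\alpha=3/2$ and $\alpha=2$, and the normmax result to $\gamma=2$, gives the stated thresholds of $2$ and $1$.

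The main obstacle is mild: the normmax case needs care, because $R$ depends on $\bm p$ and it is not immediately clear that $\theta$ is independent of $\gamma$. Resolving this through $R=1$ on a vertex of the simplex settles it. I will also note the boundary case of ties $z_{(1)}=z_{(2)}$, where $g=0$ is below every positive threshold. That case is consistent with the statement, since tied scores receive equal probabilities by order preservation and so cannot form a singleton support.
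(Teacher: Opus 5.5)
Your proposal is correct and follows the paper's proof essentially step for step. You test the vertex $\bm e_{(1)}$ against the split KKT conditions, read off $\theta=z_{(1)}-1/(\alpha-1)$ for entmax and $\theta=z_{(1)}-1$ for normmax (via $R=1$ at a vertex), reduce the inactive conditions to the one at expert $(2)$, and invoke sufficiency plus uniqueness for the converse. One small point: that reduction needs only that $z_{(2)}$ is the largest remaining score, not the map's order-preservation property you cite.
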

\begin{proof}
For softmax the claim was shown above: all probabilities are positive at finite scores.
For the other two maps we test the candidate $\bm p=\bm e_{(1)}$ against the KKT conditions~\eqref{eq:kkt-split}; since the conditions are sufficient, the candidate is the solution whenever it satisfies them, and since the solution is unique, it is not the solution whenever it violates them.

\emph{Entmax.}
At $\bm p=\bm e_{(1)}$ the active condition for expert $(1)$ reads $1^{\alpha-1}=(\alpha-1)(z_{(1)}-\theta)$, which fixes the multiplier,
\begin{equation}
\theta=z_{(1)}-\frac{1}{\alpha-1}.
\end{equation}
The inactive condition for every other expert $j$ is $z_j\leq\theta$, that is $z_{(1)}-z_j\geq1/(\alpha-1)$.
Because $z_{(2)}$ is the largest of the remaining scores, all these inequalities hold if and only if the one for $j=(2)$ holds, i.e.\ if and only if $g\geq1/(\alpha-1)$.
At $\alpha=3/2$ this is $g\geq2$ and at $\alpha=2$ it is $g\geq1$.

\emph{Normmax.}
At $\bm p=\bm e_{(1)}$ the norm is $R=\|\bm e_{(1)}\|_\gamma=1$, so the active condition for expert $(1)$ reads $(1/1)^{\gamma-1}=z_{(1)}-\theta$, i.e.\ $\theta=z_{(1)}-1$ regardless of $\gamma$.
The inactive conditions $z_j\leq\theta$ for $j\neq(1)$ again reduce to the one for $(2)$, which is $g\geq1$.
\end{proof}

Two consequences of Proposition~\ref{prop:thresholds} are used in the main text.
Firstly, since the maps preserve score order, top-2 selection always picks $\{(1),(2)\}$, and the second selection is a filler exactly when $g$ reaches the threshold: for a trained router, the singleton rate reported in Tables~\ref{tab:headline} and~\ref{tab:score-gaps} is therefore the fraction of its top-2 gaps at or above its map's threshold, $\Pr(K_+{=}1)=\Pr(g\geq t_\pi)$ with $t_\pi\in\{2,1,1\}$ for $1.5$-entmax, sparsemax and normmax-2.
Secondly, sparsemax and normmax-2 share the threshold $1$, so on the same score vector they always agree on whether the token has one or two contributors, even though their full supports and their mixture weights differ.
Any difference between their contributor counts in Table~\ref{tab:headline} must therefore come from the scores their routers learned, which is the observation that \S\ref{sec:map-or-scores} develops.

\paragraph{How the second expert fades.}
The thresholds describe where the second selected expert reaches zero weight; the closed forms on a two-expert support show how it gets there.
Suppose only experts $(1)$ and $(2)$ are active, so that $p_{(1)}+p_{(2)}=1$, and subtract the two active conditions of Equation~\eqref{eq:kkt-split}.
For sparsemax, $p_{(1)}-p_{(2)}=g$, hence
\begin{equation}
p_{(2)}=\frac{1-g}{2},
\end{equation}
a weight that decreases linearly in the gap and vanishes at $g=1$.
For $1.5$-entmax, $\sqrt{p_{(1)}}-\sqrt{p_{(2)}}=g/2$; writing $a=\sqrt{p_{(1)}}$ and $b=\sqrt{p_{(2)}}$ with $a^2+b^2=1$, squaring gives $1-2ab=g^2/4$, so $ab=(4-g^2)/8$ and $a+b=\sqrt{1+2ab}=\sqrt{8-g^2}/2$; then $b=\tfrac12[(a+b)-(a-b)]$ yields
\begin{equation}
p_{(2)}=\frac{\big(\sqrt{8-g^2}-g\big)^2}{16},
\end{equation}
which equals $1/2$ at $g=0$ and vanishes at $g=2$.
For normmax-2, $(p_{(1)}-p_{(2)})/R=g$ with $R=\sqrt{p_{(1)}^2+p_{(2)}^2}$; parameterizing $p_{(1)}=(1+x)/2$ and $p_{(2)}=(1-x)/2$ gives $R^2=(1+x^2)/2$ and hence $x^2=g^2(1+x^2)/2$, so
\begin{equation}
p_{(2)}=\frac12\Big(1-\frac{g}{\sqrt{2-g^2}}\Big),
\end{equation}
which vanishes at $g=1$.
At a gap of $g=0.5$, for instance, the second expert keeps a weight of $0.250$ under sparsemax, $0.311$ under normmax-2 and $0.326$ under $1.5$-entmax; at the same scores, the maps with the larger threshold, or the more uniform tendency, hold on to the second expert longer.
These formulas apply whenever the third score is below the map's threshold, which by Equation~\eqref{eq:kkt-split} is the condition for the support to have exactly two experts.

\subsection{Retained mass of sparsemax and normmax-2 on the same scores}
\label{app:ordering}
Although sparsemax and normmax-2 agree on the contributor count, they do not agree on the discarded mass.
The following argument shows that, on the same score vector, sparsemax always keeps at least as much top-2 mass as normmax-2.

Let $\theta$ be the normmax-2 threshold and define the truncated scores $u_j(t)=(z_j-t)_+$ for any $t\in\R$.
By Equation~\eqref{eq:normmax} with $\gamma=2$, the normmax-2 probabilities are $p_j^{\mathrm N}=u_j(\theta)/\|\bm u(\theta)\|_1$ with $\|\bm u(\theta)\|_2=1$, and by Equation~\eqref{eq:app-entmax} with $\alpha=2$ the sparsemax probabilities are $p_j^{\mathrm S}=u_j(\tau)$ with $\|\bm u(\tau)\|_1=1$.
The argument has three steps.
\begin{enumerate}[leftmargin=1.5em,itemsep=1pt]
\item \emph{The sparsemax threshold is at least the normmax-2 threshold.}
For every vector, $\|\bm u\|_1\geq\|\bm u\|_2$, so $\|\bm u(\theta)\|_1\geq\|\bm u(\theta)\|_2=1=\|\bm u(\tau)\|_1$.
Since $t\mapsto\|\bm u(t)\|_1$ is nonincreasing, this forces $\tau\geq\theta$.
In particular the sparsemax support is contained in the normmax-2 support.
\item \emph{The top-2 share of truncated mass is nondecreasing in the threshold.}
Let $m(t)=\big(u_{(1)}(t)+u_{(2)}(t)\big)/\|\bm u(t)\|_1$ be the fraction of truncated mass on the two largest coordinates, and let $s(t)$ be the number of positive coordinates of $\bm u(t)$.
On any interval of $t$ where $s(t)\geq2$ is constant, the numerator has derivative $-2$ and the denominator has derivative $-s(t)$, so
\begin{equation}
\frac{\mathrm d}{\mathrm dt}\,m(t)=\frac{-2\,\|\bm u(t)\|_1+s(t)\big(u_{(1)}(t)+u_{(2)}(t)\big)}{\|\bm u(t)\|_1^2}\geq0,
\end{equation}
because the mean of the two largest positive coordinates is at least the mean of all $s(t)$ positive coordinates, i.e.\ $\big(u_{(1)}+u_{(2)}\big)/2\geq\|\bm u\|_1/s$.
The function $m$ is continuous where the support changes and equals $1$ once $s(t)\leq2$, so it is nondecreasing on all of $\R$.
\item \emph{Conclusion.}
The retained top-2 mass of normmax-2 is $m(\theta)$ and that of sparsemax is $u_{(1)}(\tau)+u_{(2)}(\tau)=m(\tau)$, because $\|\bm u(\tau)\|_1=1$.
By the two previous steps, $m(\tau)\geq m(\theta)$, that is $\delta^{\mathrm S}\leq\delta^{\mathrm N}$.
\end{enumerate}
This is a same-score statement.
The trained sparsemax and normmax-2 routers see different scores, so the ordering of their discarded masses in Table~\ref{tab:headline} ($0.153$ against $0.250$ at 1B) is consistent with the argument but not implied by it.

\subsection{Score scale and top-2 gaps}
\label{app:score-scale}
Proposition~\ref{prop:thresholds} turns the question of how often a router drops its second expert into a question about the distribution of its top-2 gaps.
The gap distribution has a scale and a shape, and the two play different roles.

\paragraph{Definitions.}
For a trained router and a fixed MoE layer, let $z_{i,e}$ be the recorded score of expert $e$ for target $i$, with $M=285{,}107$ targets per layer (Appendix~\ref{app:metrics}) and $n$ experts.
We center each target's scores across experts, $c_{i,e}=z_{i,e}-\bar z_i$ with $\bar z_i=\frac1n\sum_ec_{i,e}$, which removes the additive freedom that no map is sensitive to, and define the layer's score scale as the root mean square of the centered scores,
\begin{equation}
\mathrm{RMS}=\Big(\frac{1}{nM}\sum_{i=1}^{M}\sum_{e=1}^{n}c_{i,e}^2\Big)^{1/2},
\label{eq:app-rms}
\end{equation}
The top-2 gap of target $i$ is $g_i=z_{i,(1)}-z_{i,(2)}$, and $g_{q}$ denotes the $q$th percentile of $\{g_i\}$ within the layer.
Table~\ref{tab:score-gaps} in Appendix~\ref{app:score-distributions} averages the scale and the gap percentiles over the four MoE layers; Table~\ref{tab:score-scale-quantiles} in Appendix~\ref{app:extra-results} lists them per layer.
All these quantities are computed in float64 from the recorded FP32 scores, without re-running any model or map.

\paragraph{Why the scale matters.}
Multiplying the scores by $\rho>0$ multiplies every gap by $\rho$ and leaves the shape of the gap distribution unchanged, so by Proposition~\ref{prop:thresholds} the singleton rate of a map with threshold $t$ on the scaled scores is $\Pr(\rho\,g\geq t)$.
Equivalently, in terms of the standardized gap $G_i=g_i/\mathrm{RMS}$, the singleton rate is $\Pr(G\geq t/\mathrm{RMS})$: a router can change how often it drops an expert either by reshaping its standardized gaps or by moving its scale relative to the threshold.
Because the router weights are trained by ordinary backpropagation under the map, nothing pins the scale, which is why \S\ref{sec:map-or-scores} treats it as a learned quantity.

\paragraph{Gap fractions under each threshold.}
Since the thresholds are conditions on the scores alone, we can ask, for every trained router, what fraction of its gaps lies at or above $1$ (the sparsemax and normmax-2 threshold) and at or above $2$ (the $1.5$-entmax threshold); Table~\ref{tab:all-source-thresholds} reports both, next to the router's own singleton rate.
For each sparse router the own rate coincides with the fraction at its own threshold, as Proposition~\ref{prop:thresholds} requires, which is also a check of the recorded scores against the recorded routing.
The table makes the scale effect visible: at 1B, $7.8\%$ of the softmax router's gaps exceed $2$ and $44.8\%$ exceed $1$, whereas the $1.5$-entmax router, with an RMS less than half as large, has only $0.05\%$ of its gaps above $2$ and $8.3\%$ above $1$.
Sparsemax and normmax-2 sit at the same threshold, so the difference between their singleton rates ($6.9\%$ against $21.3\%$) is entirely a difference between the gap distributions their routers learned; Table~\ref{tab:score-gaps} shows that it comes from the whole distribution being shifted up rather than from a heavier tail, since normmax-2's median gap is $0.61$ against $0.40$ for sparsemax and its $90$th percentile $1.06$ against $0.81$.
These fractions describe the recorded score distributions in threshold units; they do not apply one router's map to another router's scores.
\begin{table}[!tbp]
\caption{Fraction of top-2 score gaps at least $1$ (the sparsemax and normmax-2 threshold) and at least $2$ (the $1.5$-entmax threshold) for every trained router, averaged over layers, next to the router's own singleton rate. For each sparse router the own rate equals the fraction at its own threshold; softmax has no finite threshold.}
\label{tab:all-source-thresholds}
\centering\small
\setlength{\tabcolsep}{5pt}
\begin{tabular}{l ccc ccc}
\toprule
& \multicolumn{3}{c}{300M} & \multicolumn{3}{c}{1B} \\
\cmidrule(lr){2-4}\cmidrule(lr){5-7}
\bf Map & $g\geq1$ (\%) & $g\geq2$ (\%) & Own (\%) & $g\geq1$ (\%) & $g\geq2$ (\%) & Own (\%) \\
\midrule
Softmax      & 45.702 & 13.785 & 0.000  & 44.757 & 7.824 & 0.000 \\
$1.5$-entmax & 15.347 & 0.295  & 0.295  & 8.258  & 0.048 & 0.048 \\
Sparsemax    & 9.598  & 0.001  & 9.598  & 6.883  & 0.006 & 6.883 \\
Normmax-$2$  & 23.144 & 0.133  & 23.144 & 21.256 & 0.014 & 21.256 \\
\bottomrule
\end{tabular}
\end{table}

\FloatBarrier
\section{Additional results}
\label{app:extra-results}

\FloatBarrier
\subsection{Routing statistics by layer}
\label{app:observed-support-loads}
Table~\ref{tab:layerwise} breaks the layer averages of Table~\ref{tab:headline} down by MoE layer.
The singleton rate is strongly depth-dependent for sparsemax and normmax-2, which route between a fifth (sparsemax) and over half (normmax-2) of the tokens to a single expert in the first MoE layer and under a tenth in the last, while $1.5$-entmax has essentially no singletons at any depth.
The two load conventions also separate by layer: writing $f_j^{\rm nom}=f_j^++f_j^0$ for the contributor and filler parts of the nominal load,
$\operatorname{Var}_j(f_j^{\rm nom})=\operatorname{Var}_j(f_j^+)+\operatorname{Var}_j(f_j^0)+2\operatorname{Cov}_j(f_j^+,f_j^0)$,
so a negative covariance between contributor and filler loads can mask contributor imbalance in the nominal histogram, which is what happens to normmax-2 in the first layers.
\begin{table}[!tbp]
\caption{Routing statistics of every MoE layer (network depths 2, 4, 6 and 8) on the Paloma WikiText-103 test documents. Each block reports one statistic of Table~\ref{tab:headline} per layer instead of its layer average.}
\label{tab:layerwise}
\centering\small
\setlength{\tabcolsep}{4pt}
\begin{tabular}{l cccc cccc}
\toprule
& \multicolumn{4}{c}{300M, MoE layer} & \multicolumn{4}{c}{1B, MoE layer} \\
\cmidrule(lr){2-5}\cmidrule(lr){6-9}
\bf Map & 2 & 4 & 6 & 8 & 2 & 4 & 6 & 8 \\
\midrule
\textcolor{gray}{\textit{Singleton rate $\Pr(s{=}1)$ (\%)}} \\
Softmax      & 0.000  & 0.000  & 0.000 & 0.000 & 0.000  & 0.000  & 0.000  & 0.000 \\
$1.5$-entmax & 0.059  & 0.048  & 0.072 & 1.000 & 0.003  & 0.001  & 0.031  & 0.159 \\
Sparsemax    & 27.934 & 5.451  & 2.002 & 3.003 & 19.267 & 0.887  & 3.573  & 3.806 \\
Normmax-$2$  & 58.281 & 18.621 & 9.147 & 6.526 & 54.455 & 11.980 & 11.483 & 7.104 \\
\midrule
\textcolor{gray}{\textit{Discarded mass $\bar\delta$}} \\
Softmax      & 0.609 & 0.510 & 0.556 & 0.588 & 0.682 & 0.671 & 0.626 & 0.664 \\
$1.5$-entmax & 0.332 & 0.267 & 0.279 & 0.369 & 0.448 & 0.480 & 0.423 & 0.496 \\
Sparsemax    & 0.077 & 0.068 & 0.095 & 0.159 & 0.114 & 0.210 & 0.133 & 0.156 \\
Normmax-$2$  & 0.117 & 0.102 & 0.144 & 0.220 & 0.133 & 0.307 & 0.227 & 0.333 \\
\midrule
\textcolor{gray}{\textit{Contributors $\overline K_+$}} \\
Softmax      & 2.000 & 2.000 & 2.000 & 2.000 & 2.000 & 2.000 & 2.000 & 2.000 \\
$1.5$-entmax & 1.999 & 2.000 & 1.999 & 1.990 & 2.000 & 2.000 & 2.000 & 1.998 \\
Sparsemax    & 1.721 & 1.946 & 1.980 & 1.970 & 1.807 & 1.991 & 1.964 & 1.962 \\
Normmax-$2$  & 1.417 & 1.814 & 1.909 & 1.935 & 1.455 & 1.880 & 1.885 & 1.929 \\
\midrule
\textcolor{gray}{\textit{Nominal load CV}} \\
Softmax      & 0.255 & 0.322 & 0.508 & 0.530 & 0.260 & 0.401 & 0.305 & 0.391 \\
$1.5$-entmax & 0.261 & 0.332 & 0.501 & 0.505 & 0.236 & 0.341 & 0.326 & 0.448 \\
Sparsemax    & 0.279 & 0.389 & 0.591 & 0.479 & 0.367 & 0.444 & 0.419 & 0.426 \\
Normmax-$2$  & 0.256 & 0.343 & 0.473 & 0.396 & 0.244 & 0.388 & 0.372 & 0.361 \\
\midrule
\textcolor{gray}{\textit{Contributor load CV}} \\
Softmax      & 0.255 & 0.322 & 0.508 & 0.530 & 0.260 & 0.401 & 0.305 & 0.391 \\
$1.5$-entmax & 0.261 & 0.332 & 0.501 & 0.506 & 0.236 & 0.341 & 0.326 & 0.449 \\
Sparsemax    & 0.330 & 0.389 & 0.591 & 0.491 & 0.396 & 0.444 & 0.417 & 0.437 \\
Normmax-$2$  & 0.355 & 0.356 & 0.498 & 0.409 & 0.373 & 0.414 & 0.386 & 0.377 \\
\bottomrule
\end{tabular}
\end{table}

\subsection{Score distributions and scale by layer}
\label{app:score-distributions}
Table~\ref{tab:score-gaps} gives the score spreads and gap quantiles behind Figure~\ref{fig:score-gaps}, together with the corresponding 300M values. Its quantiles are averaged across layers, whereas the histograms pool the equally sized layer samples.
\begin{table}[t]
\caption{Learned score spread and top-2 gap distribution of the trained routers on the Paloma WikiText-103 test documents. RMS is the root mean square of the centered scores and $g_{50}$, $g_{90}$, $g_{99}$ are percentiles of the top-2 score gap $z_{(1)}-z_{(2)}$, all averaged over the four MoE layers. A selected expert is dropped exactly when the gap reaches the map's threshold, so the singleton rate is the fraction of gaps at or above it.}
\label{tab:score-gaps}
\centering\small
\setlength{\tabcolsep}{3pt}
\begin{tabular}{lc ccccc ccccc}
\toprule
& & \multicolumn{5}{c}{300M} & \multicolumn{5}{c}{1B} \\
\cmidrule(lr){3-7} \cmidrule(lr){8-12}
\bf Map & Threshold & RMS & $g_{50}$ & $g_{90}$ & $g_{99}$ & Singletons (\%) & RMS & $g_{50}$ & $g_{90}$ & $g_{99}$ & Singletons (\%) \\
\midrule
Softmax      & --  & 0.98 & 0.91 & 2.14 & 3.15 & 0.00  & 0.86 & 0.90 & 1.88 & 2.92 & 0.00 \\
$1.5$-entmax & $2$ & 0.52 & 0.45 & 1.12 & 1.71 & 0.30  & 0.36 & 0.44 & 0.95 & 1.45 & 0.05 \\
Sparsemax    & $1$ & 0.41 & 0.36 & 0.84 & 1.27 & 9.60  & 0.32 & 0.40 & 0.81 & 1.20 & 6.88 \\
Normmax-$2$  & $1$ & 0.53 & 0.64 & 1.12 & 1.60 & 23.14 & 0.38 & 0.61 & 1.06 & 1.42 & 21.26 \\
\bottomrule
\end{tabular}
\end{table}

Table~\ref{tab:score-scale-quantiles} gives the per-layer values behind Table~\ref{tab:score-gaps}.
The ordering of the scales is the same in every layer: softmax is the largest and sparsemax the smallest, with $1.5$-entmax and normmax-2 close together in between.
The $1.5$-entmax router's $99$th-percentile gap stays below its threshold of $2$ in every layer, with the single exception of the last MoE layer at 300M, where it reaches $2.000$ and where that router also has its highest singleton rate ($1.0\%$, Table~\ref{tab:layerwise}).
For sparsemax and normmax-2, whose threshold is $1$, the same percentiles explain the depth dependence of the singleton rate: the first MoE layer is the only one where normmax-2's median gap exceeds $1$, and it is the layer where both routers drop the second expert most often.
\begin{table}[!tbp]
\caption{Score scale and top-2 gap percentiles of every trained router in every MoE layer (network depths 2, 4, 6 and 8) on the Paloma WikiText-103 test documents; Table~\ref{tab:score-gaps} averages these values over layers. RMS is the root mean square of the centered scores and $g_q$ the $q$th percentile of the top-2 gap.}
\label{tab:score-scale-quantiles}
\centering\small
\setlength{\tabcolsep}{4pt}
\begin{tabular}{l cccc cccc}
\toprule
& \multicolumn{4}{c}{300M, MoE layer} & \multicolumn{4}{c}{1B, MoE layer} \\
\cmidrule(lr){2-5}\cmidrule(lr){6-9}
\bf Map & 2 & 4 & 6 & 8 & 2 & 4 & 6 & 8 \\
\midrule
\textcolor{gray}{\textit{RMS}} \\
Softmax      & 0.875 & 1.082 & 1.037 & 0.935 & 0.818 & 0.847 & 0.861 & 0.902 \\
$1.5$-entmax & 0.467 & 0.531 & 0.552 & 0.518 & 0.355 & 0.354 & 0.379 & 0.352 \\
Sparsemax    & 0.384 & 0.398 & 0.455 & 0.416 & 0.322 & 0.283 & 0.336 & 0.339 \\
Normmax-$2$  & 0.500 & 0.539 & 0.554 & 0.532 & 0.406 & 0.348 & 0.374 & 0.373 \\
\midrule
\textcolor{gray}{\textit{$g_{50}$}} \\
Softmax      & 0.849 & 1.131 & 0.877 & 0.773 & 0.937 & 0.998 & 0.993 & 0.678 \\
$1.5$-entmax & 0.398 & 0.566 & 0.498 & 0.346 & 0.530 & 0.435 & 0.425 & 0.354 \\
Sparsemax    & 0.429 & 0.440 & 0.315 & 0.237 & 0.663 & 0.314 & 0.370 & 0.262 \\
Normmax-$2$  & 1.137 & 0.578 & 0.468 & 0.358 & 1.044 & 0.490 & 0.518 & 0.369 \\
\midrule
\textcolor{gray}{\textit{$g_{90}$}} \\
Softmax      & 2.430 & 2.294 & 1.865 & 1.968 & 1.924 & 1.813 & 1.967 & 1.799 \\
$1.5$-entmax & 1.345 & 1.132 & 1.094 & 0.907 & 1.042 & 0.897 & 0.930 & 0.916 \\
Sparsemax    & 1.148 & 0.873 & 0.711 & 0.614 & 1.063 & 0.689 & 0.773 & 0.730 \\
Normmax-$2$  & 1.415 & 1.181 & 0.981 & 0.902 & 1.282 & 1.033 & 1.030 & 0.892 \\
\midrule
\textcolor{gray}{\textit{$g_{99}$}} \\
Softmax      & 3.067 & 3.087 & 2.795 & 3.649 & 2.715 & 2.414 & 3.049 & 3.501 \\
$1.5$-entmax & 1.711 & 1.567 & 1.557 & 2.000 & 1.463 & 1.244 & 1.455 & 1.649 \\
Sparsemax    & 1.319 & 1.254 & 1.120 & 1.383 & 1.250 & 0.989 & 1.213 & 1.361 \\
Normmax-$2$  & 1.639 & 1.525 & 1.369 & 1.845 & 1.551 & 1.266 & 1.428 & 1.419 \\
\bottomrule
\end{tabular}
\end{table}

\subsection{Inference-time budget}
\label{app:frozen-k}
Table~\ref{tab:sparsity-transfer} completes Table~\ref{tab:frozen-k} with the retained mass and the number of contributors at every budget.
Every map has its lowest loss at the trained $K=2$, so retaining more mass by selecting more experts does not help, and the sparse maps convert a larger budget into far fewer contributors than softmax: at $K=8$ the 1B sparsemax router uses $4.3$ experts on average and normmax-2 $5.1$, against $8.0$ for softmax and $7.98$ for $1.5$-entmax.
\begin{table}[!tbp]
\caption{Frozen-weight inference at budgets $K\in\{1,2,4,8\}$ on the retained DCLM diagnostic sample ($489$ blocks, $1{,}000{,}983$ scored targets) for models trained at $K=2$: loss (nats per target), retained mass $1-\bar\delta$ and number of contributors $\overline K_+$.}
\label{tab:sparsity-transfer}
\centering\small
\setlength{\tabcolsep}{4pt}
\begin{tabular}{l cccc cccc}
\toprule
& \multicolumn{4}{c}{300M, budget $K$} & \multicolumn{4}{c}{1B, budget $K$} \\
\cmidrule(lr){2-5}\cmidrule(lr){6-9}
\bf Map & 1 & 2 & 4 & 8 & 1 & 2 & 4 & 8 \\
\midrule
\textcolor{gray}{\textit{Loss (nats)}} \\
Softmax      & 3.703 & 3.560 & 3.623 & 3.913 & 3.360 & 3.253 & 3.328 & 3.835 \\
$1.5$-entmax & 3.711 & 3.562 & 3.598 & 3.693 & 3.363 & 3.261 & 3.306 & 3.512 \\
Sparsemax    & 3.715 & 3.565 & 3.566 & 3.568 & 3.365 & 3.263 & 3.273 & 3.284 \\
Normmax-$2$  & 3.703 & 3.562 & 3.574 & 3.586 & 3.368 & 3.266 & 3.286 & 3.329 \\
\midrule
\textcolor{gray}{\textit{Retained mass $1-\bar\delta$}} \\
Softmax      & 0.323 & 0.437 & 0.544 & 0.688 & 0.255 & 0.340 & 0.434 & 0.582 \\
$1.5$-entmax & 0.502 & 0.694 & 0.833 & 0.951 & 0.406 & 0.543 & 0.664 & 0.821 \\
Sparsemax    & 0.669 & 0.914 & 0.990 & 1.000 & 0.637 & 0.848 & 0.966 & 0.999 \\
Normmax-$2$  & 0.662 & 0.867 & 0.968 & 0.997 & 0.617 & 0.766 & 0.880 & 0.973 \\
\midrule
\textcolor{gray}{\textit{Contributors $\overline K_+$}} \\
Softmax      & 1.000 & 2.000 & 4.000 & 8.000 & 1.000 & 2.000 & 4.000 & 8.000 \\
$1.5$-entmax & 1.000 & 1.998 & 3.987 & 7.854 & 1.000 & 1.999 & 3.996 & 7.978 \\
Sparsemax    & 1.000 & 1.896 & 2.796 & 3.064 & 1.000 & 1.933 & 3.369 & 4.296 \\
Normmax-$2$  & 1.000 & 1.771 & 2.781 & 3.451 & 1.000 & 1.760 & 3.091 & 5.077 \\
\bottomrule
\end{tabular}
\end{table}

\subsection{Full results at 300M}
\label{app:small-scale}
The 300M models use an eight-layer backbone with 26 experts in each of the four MoE layers (Table~\ref{tab:architectures}).
Their quality, routing and budget results are reported next to the 1B ones in Tables~\ref{tab:quality-primary}, \ref{tab:headline} and~\ref{tab:frozen-k} of the main text. Table~\ref{tab:score-gaps} gives their score-gap summaries, and Tables~\ref{tab:layerwise} and~\ref{tab:score-scale-quantiles} give their per-layer statistics. Figures~\ref{fig:discarded-300m} and~\ref{fig:score-gaps-300m} show their training curves and gap distributions.

The picture matches 1B.
Softmax has the lowest validation loss and the highest mean accuracy among the MoEs in these runs.
The $1.5$-entmax router retains more mass than softmax with almost two contributors ($99.3\%$ of its states have support larger than two, with conditional discarded mass $0.314$ against $0.566$ for softmax), sparsemax retains the most mass and normmax-2 uses the fewest contributors.
The sparse routers again learn smaller score spreads than softmax (RMS $0.41$--$0.53$ against $0.98$); the $1.5$-entmax router's $99$th-percentile gap is $1.71$, below its threshold of $2$, while $13.8\%$ of the softmax router's gaps exceed $2$, and normmax-2's gaps exceed $1$ far more often than sparsemax's ($23.1\%$ against $9.6\%$).
The one difference from 1B is the load balance: at 300M, normmax-2 has a lower nominal CV than softmax ($0.367$ against $0.404$) but the same contributor CV ($0.404$), and $1.5$-entmax has the lowest contributor CV ($0.400$).

\begin{figure}[tbp]
\centering
\includegraphics[width=\linewidth]{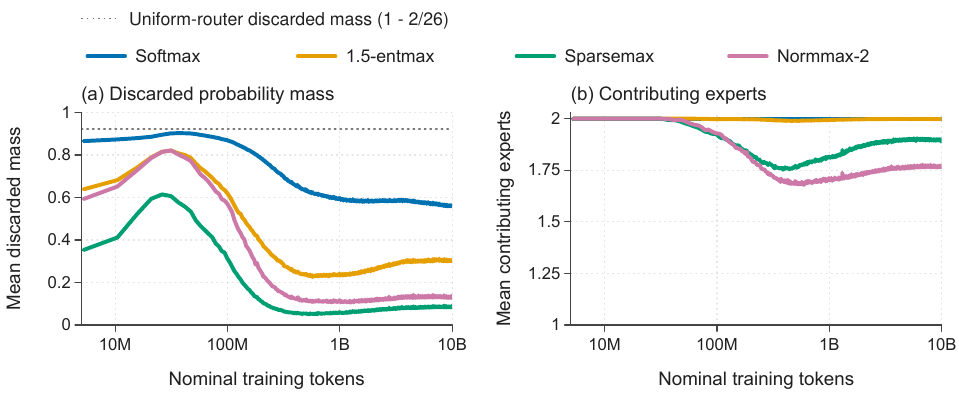}
\caption{Discarded probability mass (a) and number of contributing experts (b) during training of the 300M MoE models, measured on training batches; the horizontal axis is the number of training tokens (log scale).}
\label{fig:discarded-300m}
\end{figure}

\begin{figure}[tbp]
\centering
\includegraphics[width=\linewidth]{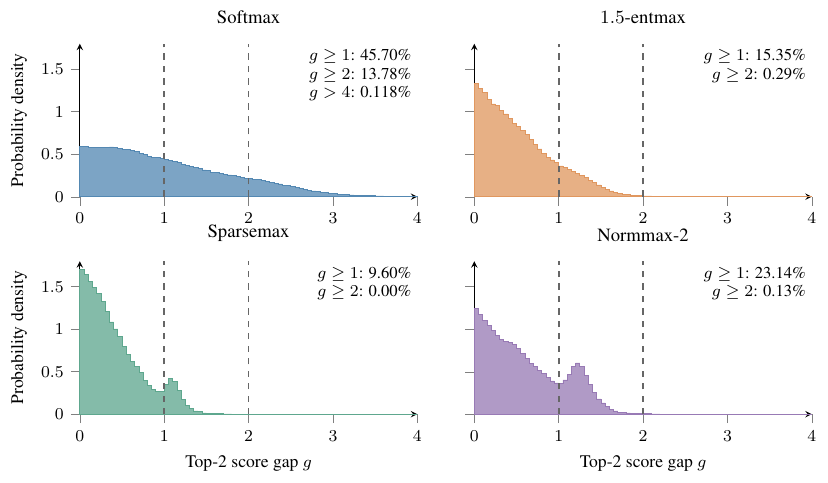}
\caption{Top-2 score gaps of the original 300M seed-42 routers on the same 61 Paloma WikiText-103 test documents. Samples, normalization and threshold lines follow Figure~\ref{fig:score-gaps}. The peaks near $g=1$ for sparsemax and normmax-2 are likewise consistent with gradient saturation beyond the singleton threshold.}
\label{fig:score-gaps-300m}
\end{figure}

\begin{figure}[tbp]
\centering
\includegraphics[width=\linewidth]{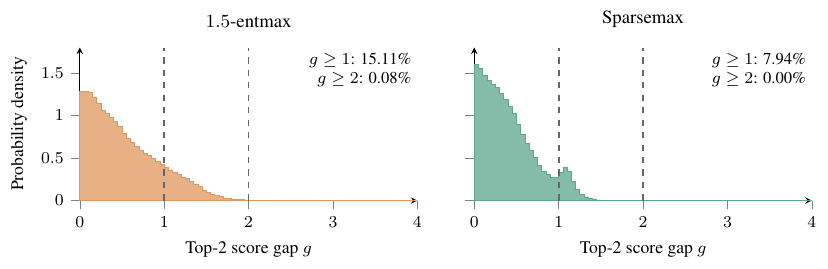}
\caption{Top-2 score gaps of the 300M sparsemax and $1.5$-entmax routers trained with the softmax-based balancing loss. Each panel pools $285{,}107$ target positions from each of the four MoE layers with equal weight, using the same 61 Paloma WikiText-103 test documents as Figure~\ref{fig:score-gaps-300m}. Dashed lines mark the singleton threshold $g=1$ for sparsemax and $g=2$ for $1.5$-entmax, with $g=1$ also shown in the entmax panel for reference.}
\label{fig:score-gaps-auxsoftmax}
\end{figure}

\end{document}